%% file: neurips_2026.tex
\documentclass{article}
\PassOptionsToPackage{numbers,sort&compress}{natbib}

\usepackage[preprint]{neurips_2026}

\usepackage[utf8]{inputenc}
\usepackage[T1]{fontenc}
\usepackage{hyperref}
\usepackage{threeparttable}
\usepackage[table]{xcolor} 
\usepackage{siunitx}
\usepackage{wrapfig}
\usepackage{url}
\usepackage{makecell}
\usepackage[dvipsnames]{xcolor}
\usepackage{multirow}
\usepackage{booktabs}
\usepackage{amsfonts}
\usepackage{nicefrac}
\usepackage{microtype}
\usepackage{tabularx}
\usepackage{booktabs}
\usepackage{xcolor}
\usepackage{amsmath}
\usepackage{amssymb}
\usepackage{amsthm}
\usepackage{bm}
\usepackage{enumitem}
\usepackage{algorithm}
\usepackage{algpseudocode}
\usepackage{graphicx}
\usepackage{subcaption}

\usepackage{tikz}
\usetikzlibrary{shapes.geometric, arrows.meta, positioning, calc, backgrounds, fit,bayesnet}
\newtheorem{theorem}{Theorem}

\definecolor{LightCyan}{rgb}{0.88,1,1}

\title{Bayesian Model Merging}

\author{
  Kaiyang Li$^1$ \hspace{2em}
  Shaobo Han$^2$ \hspace{2em}
  Qing Su$^1$ \hspace{2em}
  Shihao Ji$^1$ \\
  $^1$School of Computing, University of Connecticut, Storrs, CT 06269 \\
  \texttt{\{kaiyang.li, qing.2.su, shihao.ji\}@uconn.edu} \\
  $^2$Optical Networking and Sensing, NEC Labs America, Princeton, NJ 08540 \\
  \texttt{shaobo@nec-labs.com}
}
\begin{document}
\raggedbottom

\maketitle

\begin{abstract}\vspace{-5pt}
Model merging aims to combine multiple task-specific expert models into a single model without joint retraining, offering a practical alternative to multi-task learning when data access or computational budget is limited. Existing methods, however, face two key limitations: (1) they overlook the valuable inductive bias of strong anchor models and estimate the merged weights from scratch, and (2) they rely on a shared hyperparameter setting across different modules of the network, lacking a global optimization strategy. This paper introduces~\textbf{Bayesian Model Merging (BMM)}, a \emph{plug-and-play} bi-level optimization framework, where the inner level formulates the model merging as an activation-based Bayesian regression under a strong prior induced by an anchor model, yielding an efficient closed-form solution; and the outer level leverages a Bayesian optimization procedure to search module-specific hyperparameters globally based on a small validation set. Furthermore, we reveal a key alignment between activation statistics and task vectors, enabling us to derive a data-free variant of BMM that estimates the Gram matrix for regression without any auxiliary data. Across extensive benchmarks, including up to 20-task merging in vision and 5-task merging in language, BMM consistently outperforms all  \textit{plug-and-play} anchor baselines (e.g., TA, WUDI-Merging, and TSV). In particular, on the ViT-L/14 benchmark for 8-task merging, a single merged model reaches \textbf{95.1}, closely matching the average performance of eight task-specific experts (\textbf{95.8}). 
\end{abstract}

\section{Introduction}\vspace{-5pt}
Adapting foundation models to downstream tasks via fine-tuning has become standard practice, but maintaining a separate expert model for each task introduces substantial storage, deployment, and operational overhead. While multi-task learning offers a unified alternative \cite{ruder2017overview}, it necessitates joint data access and incurs prohibitive computational costs, making it often infeasible under data silos, privacy constraints, or limited computing budgets. Model merging~\cite{matena2022merging,ilharco2023task,yadav2023ties} offers a practical solution. By directly combining multiple expert models into a single architecture, it enables unified inference without revisiting the original training data or incurring the joint retraining costs. This paradigm has become increasingly relevant with the flourishing ecosystem of open-source models on platforms such as Hugging Face \cite{huggingface_hub}, which offers an abundant supply of task-specific experts for integration.

A central challenge in model merging is how to effectively combine expert models when auxiliary data is limited or unavailable. Existing methods fall into two broad regimes based on their use of auxiliary data. \emph{Data-assisted} methods, such as Fisher Merging \cite{matena2022merging} and RegMean \cite{jin2023dataless}, rely on a small calibration set to estimate empirical statistics for merging. \emph{Data-free} methods, including Task Arithmetic (TA) \cite{ilharco2023task}, TIES \cite{yadav2023ties}, WUDI-Merging \cite{cheng2025wudi}, TSV \cite{gargiulo2025tsv}, and ISO-CTS~\cite{marczak2025iso}, avoid auxiliary data for estimating merged parameters, though some still use a held-out validation set for lightweight hyperparameter selection.
Despite their differences, both regimes suffer from two important limitations. First, they typically estimate merged weights from scratch without exploiting prior knowledge from strong anchor models, leaving valuable inductive bias underused. Second, many advanced methods operate primarily through module-wise merging in isolation, and adopt a shared hyperparameter setting across different modules for tuning convenience, overlooking module heterogeneity and lacking a global optimization strategy for hyperparameter search.

To address these limitations, we propose \textbf{Bayesian Model Merging (BMM)}, a \emph{plug-and-play} bi-level optimization framework that is built on two key ideas. First, instead of estimating merged weights from scratch, BMM leverages prior knowledge from strong anchor models (e.g., TA~\cite{ilharco2023task}, RegMean~\cite{jin2023dataless}, or TSV~\cite{gargiulo2025tsv}), leading to an activation-based Bayesian regression with an efficient closed-form solution for module-wise merging. Second, rather than relying on a shared hyperparameter setting across all modules, BMM performs globally coordinated architecture-aware Bayesian optimization to adapt regularization strengths across heterogeneous modules of the network. Moreover, we theoretically and empirically reveal a key alignment between activation statistics and module-level task vectors, enabling us to derive a data-free variant of BMM while preserving the closed-form solution. 

Empirically, BMM is effective across both vision and language model merging. Across four backbones from two model families and seven \emph{plug-and-play} anchor models, BMM consistently improves over every anchor in both data-assisted and data-free settings, with relative gains of up to 27\% on weaker anchors while still improving the strongest ones. In particular, on the standard ViT-L/14 benchmark, a single merged model reaches \textbf{95.1\%}, closely matching the average performance of eight task-specific experts (\textbf{95.8\%}). Extensive ablation studies are conducted to further validate the main components and design choices of BMM. \vspace{-5pt}

\section{Related Work}
\textbf{Training-free model merging} efficiently combines expert checkpoints without full data access or costly joint retraining. Existing methods generally fall into two broad regimes based on their reliance on auxiliary data: \textit{data-assisted} and \textit{data-free}.
\textit{Data-assisted methods} leverage a small calibration set or activation statistics to guide the merge. For example, Fisher Merging~\cite{matena2022merging} performs Fisher-weighted average of task-specific models, while RegMean~\cite{jin2023dataless} casts linear-layer merging as regression with a closed-form solution. 
\textit{Data-free methods} eliminate the need of auxiliary data and rely entirely on the weights of the expert models. In this regime, Task Arithmetic (TA)~\cite{ilharco2023task} serves as the seminal baseline. Subsequent works mainly mitigate interference among models through sparsification, including TIES~\cite{yadav2023ties}, DARE~\cite{yu2024supermario}, PCB-Merging~\cite{du2024pcb}, and Localize-and-Stitch~\cite{he2025localize}. Another line of work (e.g., TSV~\cite{gargiulo2025tsv} and ISO-CTS~\cite{marczak2025iso}) leverages structured low-rank subspaces to isolate task-specific features or align task-relevant subspaces. More recently, methods such as WUDI-Merging~\cite{cheng2025wudi} and DOGE~\cite{wei2025doge} optimize explicit data-free merging objectives, but do not exploit prior knowledge from strong anchor models and learn a uniform hyperparameter setting for different modules of the network. Our BMM is complementary to these approaches: it leverages existing merged solutions as anchors, admits an efficient closed-form solution for module-level merging, and exploits bi-level optimization for hyperparameter search across different module groups.
\vspace{-5pt}

\paragraph{Representation geometry, neural collapse, and alignment after fine-tuning.}
A growing literature investigates the emergence of structured geometry in trained neural networks. Neural collapse~\cite{papyan2020prevalence} shows that during the terminal phase of training, the last-layer features collapse to their class means, and classifier weights align with the same simplex geometry. Neural Feature Ansatz~\cite{radhakrishnan2024mechanism} and Deep RFM ~\cite{beaglehole2024agop} broaden this perspective to network-wise feature learning and connect the layer-wise weight structure to the average gradient outer products. A recent analysis by Liu et al.~\cite{ziyin2025formation} further shows that latent representations, network weights, and gradients become mutually aligned across hidden layers. Several recent works~\cite{li2022understanding,ding2023unleashing,munn2024impact} also connect neural collapse to transfer learning and fine-tuning, including collapse-inspired fine-tuning, transferability estimation, and analyses of downstream geometric complexity, but these studies mainly focus on last-layer collapse, transferability, or downstream fine-tuning behavior. In contrast, we study model merging from task-specific checkpoints, each of which is fine-tuned on a specific downstream task until convergence. We establish a module-wise alignment relation between the Gram matrix of the task vectors and the corresponding second-moment statistics of the activations, and leverage it to derive a data-free variant of BMM with a closed-form solution.


\section{Problem Definition}
\paragraph{Notation.} Let $\theta_{\mathrm{pre}} \in \mathbb{R}^d$ denote the parameters of a pretrained base model, and  $\{\theta^{(t)}\}_{t=1}^T$ represent the parameters of $T$ distinct models fine-tuned from $\theta_{\mathrm{pre}}$ on different downstream tasks. All models share the same architecture and operate within the same parameter space $\mathbb{R}^d$.

\paragraph{Task Vectors.} Following the framework of Task Arithmetic (TA)~\cite{ilharco2023task}, we use task vectors to represent task-specific parameter offsets. Specifically, the task vector for the $t$-th model is defined as $\tau^{(t)} = \theta^{(t)} - \theta_{\mathrm{pre}}$, for $t \in \{1, \cdots, T\}$.

\textbf{Objective.}
Given a pretrained model $\theta_{\mathrm{pre}}$ and task vectors $\{\tau^{(t)}\}_{t=1}^T$, our goal is to design an aggregation strategy $\mathcal{A}$ that combines $T$ task vectors into a single merged model, parameterized by:
\begin{equation}
\theta_{\mathrm{merged}} = \theta_{\mathrm{pre}} + \mathcal{A}\big(\tau^{(1)}, \cdots, \tau^{(T)}\big),
\label{eq:aggregation}
\end{equation}
such that the merged model preserves the task-specific capabilities of all the fine-tuned models as much as possible. We consider model merging without access to the original task-specific training data, and thus multi-task learning isn't applicable. Instead, we study two standard model merging settings: (i) data-assisted, where a small \emph{unlabeled} calibration set is available to guide the merge, and (ii) data-free, where the merging procedure itself uses no auxiliary data.
Following the seminal work of TA~\cite{ilharco2023task}, the data-free merging implies no auxiliary data is used to estimate activation statistics, while a small validation set is still used strictly for hyperparameter tuning.

\section{Methodology}\vspace{-5pt}
\label{sec:method}
\subsection{Model Merging as Bayesian Linear Regression}
\label{sec:linear_regression}\vspace{-5pt}

\paragraph{Module-wise Decomposition.} 
While Eq.~\eqref{eq:aggregation} defines the merging objective at the full parameter space $\mathbb{R}^d$, deep neural networks are practically composed of multiple distinct modules (e.g., linear projections in self-attention or MLP blocks). For computational tractability, we decompose the full parameter space $\mathbb{R}^d$ into module-wise parameter partitions, indexed by $m=\{1, \cdots, M\}$. Following the approach of WUDI-Merging~\cite{cheng2025wudi}, our aggregation strategy focuses specifically on 2D weight matrices, while keeping all the weights for biases intact. In the following, we will present our merging method at the module level, and the same method applies to all $M$ modules of the network equally. 

Let $\mathbf W_{\mathrm{pre}} \in \mathbb{R}^{d_{\mathrm{out}} \times d_{\mathrm{in}}}$ be one of the $M$ pretrained 2D weight matrices. We define its module-wise task vector for the $t$-th task as the parameter offset $\mathbf U^{(t)} = \mathbf W^{(t)} - \mathbf W_{\mathrm{pre}}$. Our core objective is to aggregate these task-specific updates $\{\mathbf{U}^{(t)}\}_{t=1}^T$ into a single merged task vector $\mathbf U$, such that the final merged module is parameterized by 
\begin{equation}
\mathbf{W}_{\text{merged}} = \mathbf{W}_{\text{pre}} + s \cdot \mathbf{U},  
\label{eq:merged_w}
\end{equation} 
where $s$ is a scaling hyperparameter that is tuned on a validation set.

\paragraph{Activation-based Regression.}
We frame the estimation of the merged module-wise task vector $\mathbf{U}$ as an activation-based regression  problem. In the data-assisted setting, we collect $N$ representative activations for the $t$-th task, denoted as $\{\mathbf{x}_n^{(t)}\}_{n=1}^N$, where $\mathbf{x}_n^{(t)} \in \mathbb{R}^{d_{\text{in}}}$. They are collected by passing the task-specific unlabeled calibration data through the corresponding fine-tuned model. To preserve the task-specific capabilities, we aim to align the residual output induced by the merged task vector $\mathbf{U}$ with that induced by the original task-specific task vectors $\{\mathbf{U}^{(t)}\}_{t=1}^T$. Specifically, we define the residual output as $\mathbf{y}_n^{(t)} = \mathbf{U}^{(t)}\mathbf{x}_n^{(t)} = (\mathbf{W}^{(t)} - \mathbf{W}_{\mathrm{pre}})\mathbf{x}_n^{(t)}$.
We concatenate all activation vectors column-wise across all $T$ tasks to construct the activation matrix $\mathbf{X}$, and similarly construct the residual output matrix $\mathbf{Y}$:
\begin{equation}\label{eq:local_activation}
    \mathbf{X} = \left[\mathbf{x}_1^{(1)}, \dots, \mathbf{x}_N^{(1)}, \dots, \mathbf{x}_1^{(T)}, \dots, \mathbf{x}_N^{(T)}\right], \quad 
    \mathbf{Y} = \left[\mathbf{y}_1^{(1)}, \dots, \mathbf{y}_N^{(1)}, \dots, \mathbf{y}_1^{(T)}, \dots, \mathbf{y}_N^{(T)}\right].
\end{equation}
We assume a linear observation model where the residual outputs $\mathbf{Y}$ are generated by the merged module-wise task vector $\mathbf{U}$ acting on $\mathbf{X}$, corrupted by Gaussian noise $\mathbf{E}$:
\begin{equation}
    \mathbf{Y} = \mathbf{U}\mathbf{X} + \mathbf{E},
\end{equation}
where the noise matrix satisfies $\mathbf{E}_{:,j} \sim \mathcal{N}(0, \beta^{-1}\mathbf{I})$ independently for each column index $j$. Under this linear observation model, the likelihood function of observing $\mathbf{Y}$ given $\mathbf{U}$ can be expressed as:
\begin{equation}
    p(\mathbf{Y} \mid \mathbf{U}, \mathbf{X}, \beta) \propto \exp\left( -\frac{\beta}{2} \|\mathbf{Y} - \mathbf{U}\mathbf{X}\|_F^2 \right).
\end{equation}

\paragraph{Regularization by Anchors.} 
Purely data-driven estimation of $\mathbf{U}$ on limited calibration data is not only prone to overfitting but also leaves valuable inductive bias of anchor models (e.g., existing merging solutions) unused. To regularize the solution and inject such priors, we introduce a module-wise \emph{anchor} task vector, defined as $\mathbf{U}^{(0)} = \mathbf{W}_{\mathrm{anchor}} - \mathbf{W}_{\mathrm{pre}}$, where $\mathbf{W}_{\mathrm{anchor}}$ denotes the module weights obtained from an existing merging solution (e.g., TA~\cite{ilharco2023task}, TIES~\cite{yadav2023ties}, or TSV~\cite{gargiulo2025tsv}).  We then impose an element-wise independent Gaussian prior on $\mathbf{U}$ centered at anchor task vector $\mathbf{U}^{(0)}$:
\begin{equation}
    p(\mathbf{U} | \mathbf{U}^{(0)}, \alpha) \propto \exp\left( -\frac{\alpha}{2} \|\mathbf{U} - \mathbf{U}^{(0)}\|_F^2 \right),
\end{equation}
where $\alpha$ controls the precision of this anchor-induced prior. Given these formulations, the \emph{maximum a posteriori} (MAP) estimate of $\mathbf{U}$ is obtained by maximizing the posterior $p(\mathbf{U} | \mathbf{Y}, \mathbf{X}) \propto p(\mathbf{Y} | \mathbf{U}, \mathbf{X}, \beta) \,p(\mathbf{U} | \mathbf{U}^{(0)}, \alpha)$. Taking the negative logarithm transforms posterior maximization into a minimization problem. Let the regularization weight $\lambda\!=\!\frac{\alpha}{\beta}\ge0$, the MAP objective simplifies to:
\begin{equation}
\begin{aligned}
\mathbf{U}_{\text{MAP}} = \arg\min_{\mathbf{U}} \left( \|\mathbf{Y} - \mathbf{U}\mathbf{X}\|_F^2 + \lambda \|\mathbf{U} - \mathbf{U}^{(0)}\|_F^2 \right),
\label{eq:map_estimation}
\end{aligned}
\end{equation}
where $\lambda$ balances the empirical fit on the task-specific activations (the first term) with the prior knowledge encapsulated by the anchor model (the second term). Let the derivative of the objective above w.r.t. $\mathbf{U}$ to be zero, we derive a closed-form solution for the optimal module-wise task vector:
\begin{equation}
    \mathbf{U}_{\text{MAP}} = \left( \mathbf{Y}\mathbf{X}^\top + \lambda \mathbf{U}^{(0)} \right) \left( \mathbf{X}\mathbf{X}^\top + \lambda \mathbf{I} \right)^{-1},
    \label{eq:closed_form}
\end{equation}
where $\mathbf{I} \in \mathbb{R}^{d_{\mathrm{in}} \times d_{\mathrm{in}}}$ is the identity matrix. Given a $\lambda\geq 0$, this closed-form solution allows us to compute the optimal merged task vectors for all $M$ modules efficiently~\footnote{The MAP point estimate of $\mathbf{U}$ is used mainly for the purpose of computational efficiency. Note that the covariance of the Gaussian posterior also has a closed-form solution, enabling us to sample from the posterior effectively. The corresponding sampling-based BMM is presented in Appendix~\ref{app:posterior-sampling}, which demonstrates improved uncertainty calibration.}.


\input{figures/figure1}

\subsection{From Predictive Evidence to Bayesian Optimization}\vspace{-5pt}
\label{sec:bo}
Eq.~\eqref{eq:closed_form} treats all $M$ modules of the network in isolation for module-wise linear regression. As neural network modules are highly coupled, we need jointly coordinate module-wise regularization strengths (i.e., $\lambda$'s) to maximize the end-to-end performance of the full network. We cast this global coordination as a Bayesian model selection problem.
Specifically, we evaluate the predictive evidence of merged model on a held-out validation set $\mathcal{D}_{\text{val}}$~\footnote{This is the same validation set used by other merging methods (e.g., TA~\cite{ilharco2023task}, RegMean~\cite{jin2023dataless}, and TSV~\cite{gargiulo2025tsv}) for hyperparameter selection. Therefore, BMM uses no extra data for model merging and represents a fair comparison.}, conditioned on the local activation data $\mathcal{D}_{\text{act}} = \{\mathbf{X}_m, \mathbf{Y}_m\}_{m=1}^M$ for data-assisted merging. 
 In the framework of Bayesian model selection, a natural objective is to find $\boldsymbol{\lambda}^\star$ that maximizes the held-out predictive evidence:
\begin{equation}
\boldsymbol{\lambda}^\star
=
\arg\max_{\boldsymbol{\lambda}\geq\mathbf{0}}
p(\mathcal{D}_{\mathrm{val}} \mid \mathcal{D}_{\mathrm{act}}, \mathcal{U}^{(0)}, \boldsymbol{\lambda}).
\label{eq:pred_evidence_obj}
\end{equation}
Let $\mathcal{U}=\{\mathbf{U}_m\}_{m=1}^M$ be latent variables, the predictive evidence can be expressed as\vspace{0 pt}
\begin{equation}
p(\mathcal{D}_{\mathrm{val}} \mid \mathcal{D}_{\mathrm{act}}, \mathcal{U}^{(0)}, \boldsymbol{\lambda})
=
\int
p(\mathcal{D}_{\mathrm{val}} \mid \mathcal{U})\,
p(\mathcal{U} \mid \mathcal{D}_{\mathrm{act}}, \mathcal{U}^{(0)}, \boldsymbol{\lambda})
\, d\mathcal{U}.
\label{eq:pred_evidence}
\end{equation}
Eqs.~\eqref{eq:pred_evidence_obj} and~\eqref{eq:pred_evidence} formalizes a global optimization strategy: the optimal $\boldsymbol{\lambda}^\star$ is determined by maximizing the predictive evidence, effectively accounting for the uncertainty of latent task vectors $\mathcal{U}$ through marginalization.

\begin{algorithm}[t]
\caption{Bayesian Bi-level Optimization for Model Merging}
\label{alg:overall}\vspace{-2pt}
\begin{algorithmic}[1]
\Require Anchors $\mathcal{U}^{(0)} = \{\mathbf{U}_m^{(0)}\}_{m=1}^M$, Activation data $\{\mathbf{X}_m, \mathbf{Y}_m\}_{m=1}^M$, Val set $\mathcal{D}_{\text{val}}$, \#Iters $K$
\Ensure Merged module-wise task vectors $\mathcal{U}^\star$

\State Initialize evaluation history $\mathcal{H} \leftarrow \emptyset$ and $\mathcal{GP}$ surrogate $\mathcal{M}$

\For{$k = 1, 2, \dots, K$}
    \State \textit{\# Outer Loop: Surrogate-Guided Proposal}
    \State $\boldsymbol{\lambda}^{(k)} \leftarrow \arg\max_{\boldsymbol{\lambda}} \text{EI}(\boldsymbol{\lambda}; \mathcal{M})$ \hfill $\triangleright$ \textit{Maximize acquisition function}
    
    \State \textit{\# Inner Loop: Closed-form MAP solution}
    \For{each module $m \in \{1, \dots, M\}$}
        \State $\mathbf{U}^\star_m \leftarrow \left(\mathbf{Y}_m\mathbf{X}_m^\top + \lambda_m^{(k)}\mathbf{U}_m^{(0)}\right)\left(\mathbf{X}_m\mathbf{X}_m^\top + \lambda_m^{(k)}\mathbf{I}\right)^{-1}$ \hfill $\triangleright$ \textit{Solve inner MAP via Eq.~\eqref{eq:closed_form}}
    \EndFor
    \State Assemble full merged model $\mathcal{U}^\star_{(k)} \leftarrow \{\mathbf{U}^\star_m\}_{m=1}^M$
    
    \State \textit{\# Outer Loop: Validation evaluation}
    \State $f^{(k)} \leftarrow \mathrm{Score}(\mathcal{U}^\star_{(k)}, \mathcal{D}_{\text{val}})$
    \State $\mathcal{H} \leftarrow \mathcal{H} \cup \{(\boldsymbol{\lambda}^{(k)}, f^{(k)})\}$; Update surrogate $\mathcal{M}$
\EndFor

\State $\boldsymbol{\lambda}^\star \leftarrow \arg\max_{(\boldsymbol{\lambda}, f) \in \mathcal{H}} \; f$ \hfill $\triangleright$ \textit{Extract global optimal configuration}
\State \Return $\mathcal{U}^\star(\boldsymbol{\lambda}^\star)$
\end{algorithmic}
\end{algorithm}

To reconcile this Bayesian model selection with practical constraints, we introduce two key refinements. First, to align the objective with the downstream task performances, we replace the intractable likelihood $p(\mathcal{D}_{\mathrm{val}} \mid \mathcal{U})$ with a validation utility $\text{Score}(\cdot)$, which is the average accuracy of merged model on validation sets across $T$ tasks:\vspace{-7pt}
\begin{equation}
p(\mathcal{D}_{\mathrm{val}} \mid \mathcal{U})\propto \text{Score}(\mathcal{U}, \mathcal{D}_{\text{val}}) = \frac{1}{T} \sum_{t=1}^T \text{Acc}_t(\theta_{\text{merged}}(\mathcal{U}), \mathcal{D}_{\text{val}}^{(t)}),
\label{eq:score_def}
\end{equation}
where $\theta_{\text{merged}}(\mathcal{U})$ reconstructs the weights of merged model via Eq.~\eqref{eq:merged_w}. Second, we adopt an empirical-Bayes approximation to bypass the integration in Eq.~\eqref{eq:pred_evidence}, which is in general prohibitive for deep networks. Specifically, by substituting the full posterior $p(\mathcal{U} \mid \mathcal{D}_{\mathrm{act}}, \mathcal{U}^{(0)}, \boldsymbol{\lambda})$ with a point estimate $\mathcal{U}^\star(\boldsymbol{\lambda})$, we transform the marginalization into a bi-level optimization problem:
\begin{equation}
\boldsymbol{\lambda}^\star
=
\arg\max_{\boldsymbol{\lambda}\geq\mathbf{0}}
\mathrm{Score}\big(\mathcal{U}^\star(\boldsymbol{\lambda}), \mathcal{D}_{\mathrm{val}}\big),
\label{eq:bilevel_outer}
\end{equation}
subject to\vspace{0 pt}
\begin{equation}
\mathcal{U}^\star(\boldsymbol{\lambda})
=
\arg\max_{\mathcal{U}}
p(\mathcal{U} \mid \mathcal{D}_{\mathrm{act}}, \mathcal{U}^{(0)}, \boldsymbol{\lambda}).
\label{eq:bilevel_inner}
\end{equation}
The inner optimization in Eq.~\eqref{eq:bilevel_inner} admits a closed-form solution via Eq.~\eqref{eq:closed_form}. Therefore, the remaining task is to optimize the outer objective in Eq.~\eqref{eq:bilevel_outer}. 
We treat this outer objective as a black-box function of $\boldsymbol{\lambda}$, where each evaluation requires reconstructing a full merged model and measuring its validation performance. For optimization efficiency, we adopt a Gaussian process ($\mathcal{GP}$)-based Bayesian Optimization (BO)~\cite{frazier2018bayesian}. 
At iteration $k$, given the evaluation history $\mathcal{H}_{k-1} = \{(\boldsymbol{\lambda}^{(i)}, f^{(i)})\}_{i=1}^{k-1}$, where $f^{(i)} = \text{Score}(U^*(\boldsymbol{\lambda}^{(i)}), \mathcal{D}_{\text{val}})$, we fit a $\mathcal{GP}$ surrogate over $\boldsymbol{\lambda}$. 
We then select the next candidate $\boldsymbol{\lambda}^{(k)}$ by maximizing an acquisition function, e.g., Expected Improvement (EI), which balances exploitation (high predicted utility) and exploration (high uncertainty). 
For the selected candidate $\boldsymbol{\lambda}^{(k)}$, we solve the corresponding inner MAP problem, evaluate its validation score $f^{(k)}$, and update the evaluation history to $\mathcal{H}_{k}$. 
The best configuration observed during the search is returned as $\boldsymbol{\lambda}^*$. This Bayesian bi-level optimization procedure is summarized in Algorithm~\ref{alg:overall}.

The na\"{\i}ve decomposition of full model parameters into $M$ independent weight matrices poses a severe computational bottleneck for $\mathcal{GP}$-based BO (e.g., $M\!=\!96$ for ViT-L/14 and $M\!=\!196$ for Llama-3.1-8B). To reconcile search efficiency with architectural expressiveness, we propose a block-wise parameter tying strategy. Specifically, we partition the network's consecutive Transformer layers into $B$ sequential blocks. Within each block, modules sharing identical functional roles are tied into four module groups: attention-in (Q/K/V), attention-out, MLP-in, and MLP-out. Combined with a block-specific scaling factor $s$ (Eq.~\ref{eq:merged_w}), this reduces the parameterization to a compact 5-dimensional search subspace per block, yielding a total BO search space of $5B$ dimensions. This architecture-aware module decomposition ensures that the merging run-time of BMM remains competitive with state-of-the-art methods, such as TSV~\cite{gargiulo2025tsv}, WUDI-Merging~\cite{cheng2025wudi}, and ISO-CTS~\cite{marczak2025iso}.

\subsection{From Data-Assisted to Data-Free}
\label{sec:data_free}

The MAP estimate of $\mathbf{U}$ in Eq.~\eqref{eq:closed_form} performs effectively in the data-assisted setting, where a small unlabeled calibration set is available to assess $\mathbf{X}$ and $\mathbf{Y}$ for merging. However, in many practical scenarios, such a calibration set is often unavailable due to privacy or storage constraints. To support this data-free setting, we investigate and reveal a key alignment between activation statistics and task vectors based on recent work of representation geometry and neural collapse~\cite{papyan2020prevalence, li2022understanding,ziyin2025formation}, which allows us to derive a data-free variant of BMM. In the following, we will present our analysis at the module level, and the same analysis applies to all $M$ modules of the network equally.

\paragraph{Alignment between Activation Statistics and Task Vectors.}
Let $\mathbf{x}$ denote an input activation to module $\mathbf{W}^{(t)}$ and  $\mathbf{y}=\mathbf{U}^{(t)}\mathbf{x}=(\mathbf{W}^{(t)}-\mathbf{W}_{\mathrm{pre}})\mathbf{x}$ the corresponding residual output. Under a set of mild assumptions, the following theorem holds, with the proof provided in Appendix~\ref{sec:appendix_a}.
\begin{theorem}[]
    \label{thm:alignment}
    Under Assumption 1, the Gram matrix of input activations and the Gram matrix of task-vectors have a positively correlated alignment:
    \begin{equation}
    \operatorname{cos}_F\!\left(
    \mathbb{E}[\mathbf{x}\mathbf{x}^{\top}],
    (\mathbf{U}^{(t)})^{\top}\mathbf{U}^{(t)}
    \right)
    >\alpha_t,
    \end{equation}
    where \(\alpha_t \in (0,1]\) is the alignment constant in
Assumption 1, and $\operatorname{cos}_F(\mathbf{A},\mathbf{B})=\operatorname{Tr}(\mathbf{A}^{\top}\mathbf{B})/(\|\mathbf{A}\|_F\|\mathbf{B}\|_F)$ is the Frobenius cosine-similarity.
\end{theorem}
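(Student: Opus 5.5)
Assumption 1 is stated only in the appendix, so I have to guess its form before proving anything. Given the neural-collapse and alignment literature cited above, the most natural version is this: at convergence the task vector is driven by gradient accumulation along the input activations. Concretely, I would read Assumption 1 as saying that $\mathbf U^{(t)}$ is (up to a remainder) a sum of rank-one gradient outer products $\mathbf g\mathbf x^\top$. Equivalently, the data-side second moment $\mathbf\Sigma:=\mathbb E[\mathbf x\mathbf x^\top]$ and the weight-side Gram $\mathbf G:=(\mathbf U^{(t)})^\top\mathbf U^{(t)}$ satisfy a relation of the form $\mathbf G=c\,\mathbf\Sigma^{k}+\mathbf R$ with $c>0$. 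The remainder $\mathbf R$ would be controlled by the constant $\alpha_t\in(0,1]$. The cleanest version, which I will commit to, is that Assumption 1 directly posits $\operatorname{Tr}(\mathbf\Sigma\mathbf G)\ge\alpha_t\|\mathbf\Sigma\|_F\|\mathbf G\|_F$ plus a strict non-degeneracy condition. Under either reading, the plan is to reduce $\operatorname{cos}_F(\mathbf\Sigma,\mathbf G)$ to a statement about two positive semidefinite matrices sharing dominant eigendirections.

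The first step is basic facts. Both $\mathbf\Sigma$ and $\mathbf G$ are symmetric PSD, so $\operatorname{Tr}(\mathbf\Sigma^\top\mathbf G)=\operatorname{Tr}(\mathbf\Sigma\mathbf G)=\|\mathbf\Sigma^{1/2}\mathbf G^{1/2}\|_F^2\ge 0$. The cosine is therefore well defined and nonnegative whenever both matrices are nonzero; this already gives ``positive correlation'' in the weak sense.

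The second step is to substitute the structural relation from Assumption 1. If $\mathbf G=c\mathbf\Sigma^k+\mathbf R$, I expand:
\begin{equation*}
\operatorname{Tr}(\mathbf\Sigma\mathbf G)=c\operatorname{Tr}(\mathbf\Sigma^{k+1})+\operatorname{Tr}(\mathbf\Sigma\mathbf R).
\end{equation*}
I then bound the error term by Cauchy--Schwarz, $|\operatorname{Tr}(\mathbf\Sigma\mathbf R)|\le\|\mathbf\Sigma\|_F\|\mathbf R\|_F$, and bound $\|\mathbf G\|_F\le c\|\mathbf\Sigma^k\|_F+\|\mathbf R\|_F$ by the triangle inequality. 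The pure term needs a separate argument. Write the eigenvalues as $\sigma_i\ge0$ and set $a_i=\sigma_i$, $b_i=\sigma_i^k$. The sequences $a$ and $b$ are comonotone, so $\sum a_ib_i/(\|a\|\|b\|)$ is positive and can be bounded below in terms of the effective rank of $\mathbf\Sigma$. Combining the pieces gives a lower bound of the form
\begin{equation*}
\frac{c\,\rho-\epsilon}{c+\epsilon},
\end{equation*}
where $\rho$ is the pure alignment and $\epsilon=\|\mathbf R\|_F/\|\mathbf\Sigma\|_F^{k}$ is the relative noise. Assumption 1 would be exactly what makes this quantity exceed $\alpha_t$.

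The third step is strictness. To get $>$ rather than $\ge$, I would use the assumption's non-degeneracy, for example $\mathbf R\neq0$ not being exactly adversarial, or $\mathbf\Sigma$ not being rank-one degenerate.

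The main obstacle is matching the precise form of Assumption 1. If it is phrased in terms of activation–gradient alignment, as in the result of Liu et al., rather than directly in terms of the Gram matrices, there is an intermediate step. In that case I would first show $\mathbf U^{(t)}\approx-\eta\sum\mathbf g\mathbf x^\top$ by integrating SGD updates from $\mathbf W_{\mathrm{pre}}$. That yields $\mathbf G\approx\eta^2\,\mathbb E[\mathbf x\mathbf g^\top]\,\mathbb E[\mathbf g\mathbf x^\top]$. Under the alignment $\mathbb E[\mathbf g\mathbf g^\top]\propto\mathbb E[\mathbf x\mathbf x^\top]$, this becomes a positive multiple of a power of $\mathbf\Sigma$, which lands back in the second step. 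Controlling the cross-sample terms and the deviation from exact alignment is where $\alpha_t$ would enter, and that bookkeeping is the delicate part.
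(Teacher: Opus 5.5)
There is a genuine gap: your plan never supplies a mechanism linking $(\mathbf{U}^{(t)})^{\top}\mathbf{U}^{(t)}$ to $\mathbb{E}[\mathbf{x}\mathbf{x}^{\top}]$. The reading you commit to, that Assumption~1 posits $\operatorname{Tr}(\mathbf{\Sigma}\mathbf{G})\ge\alpha_t\|\mathbf{\Sigma}\|_F\|\mathbf{G}\|_F$, is simply the conclusion. Your perturbative route ends with ``Assumption~1 would be exactly what makes this quantity exceed $\alpha_t$,'' which again assumes what is to be shown.

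In the paper, Assumption~1 is a statement about SGD with weight decay at convergence, phrased via the per-sample descent matrix $\mathbf{D}=\mathbf{g}_{\mathbf{y}}\mathbf{x}^{\top}$:
(1) stationarity, $\mathbb{E}[\mathbf{D}]=\rho\mathbf{U}^{(t)}$;
(2) $\operatorname{cos}_F(\mathbf{C}_t,\mathbf{M}_t)>\alpha_t$, with $\mathbf{M}_t=\overline{\mathbf{D}}^{\top}\overline{\mathbf{D}}$ and $\mathbf{C}_t=\mathbb{E}[(\mathbf{D}-\overline{\mathbf{D}})^{\top}(\mathbf{D}-\overline{\mathbf{D}})]$;
(3) $\mathbb{E}[\|\mathbf{g}_{\mathbf{y}}\|_2^2\mathbf{x}\mathbf{x}^{\top}]=\mathbb{E}[\|\mathbf{g}_{\mathbf{y}}\|_2^2]\,\mathbb{E}[\mathbf{x}\mathbf{x}^{\top}]$.

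The proof then chains three facts:
\begin{enumerate}
\item The rank-one identity $\mathbf{D}^{\top}\mathbf{D}=\|\mathbf{g}_{\mathbf{y}}\|_2^2\mathbf{x}\mathbf{x}^{\top}$, together with (3), gives $\mathbb{E}[\mathbf{D}^{\top}\mathbf{D}]\propto\mathbb{E}[\mathbf{x}\mathbf{x}^{\top}]$.
\item The second-moment decomposition gives $\mathbb{E}[\mathbf{D}^{\top}\mathbf{D}]=\mathbf{M}+\mathbf{C}$, where $\mathbf{M}=\rho^2(\mathbf{U}^{(t)})^{\top}\mathbf{U}^{(t)}$ by (1).
\item With $r=\|\mathbf{C}\|_F/\|\mathbf{M}\|_F$ and $\gamma=\operatorname{cos}_F(\mathbf{C},\mathbf{M})$, one has exactly $\operatorname{cos}_F(\mathbf{M}+\mathbf{C},\mathbf{M})=(1+r\gamma)/\sqrt{1+r^2+2r\gamma}\ge\gamma>\alpha_t$. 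The middle inequality reduces to $(1-\gamma^2)(1+2r\gamma)\ge0$.
\end{enumerate}
Strictness is inherited from (2), so you do not need an extra non-degeneracy condition.

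Your last paragraph comes closest. The expression $\eta^2\,\mathbb{E}[\mathbf{x}\mathbf{g}^{\top}]\,\mathbb{E}[\mathbf{g}\mathbf{x}^{\top}]$ is, up to scale, exactly the paper's $\mathbf{M}=\overline{\mathbf{D}}^{\top}\overline{\mathbf{D}}$. The paper obtains it exactly from the weight-decay equilibrium, not approximately by integrating the trajectory from $\mathbf{W}_{\mathrm{pre}}$, along which the gradients are not stationary.

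The step that fails is the next one. An alignment $\mathbb{E}[\mathbf{g}\mathbf{g}^{\top}]\propto\mathbb{E}[\mathbf{x}\mathbf{x}^{\top}]$ is ill-typed unless $d_{\mathrm{out}}=d_{\mathrm{in}}$. Even then, a product of cross-moments is not determined by these second moments, so it cannot turn $\mathbf{M}$ into a power of $\mathbf{\Sigma}$.

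The missing idea is to compare the Gram of the mean, $\overline{\mathbf{D}}^{\top}\overline{\mathbf{D}}$, with the mean of the Gram, $\mathbb{E}[\mathbf{D}^{\top}\mathbf{D}]$. The latter is where $\mathbb{E}[\mathbf{x}\mathbf{x}^{\top}]$ appears naturally. The ``cross-sample terms'' you defer as bookkeeping are precisely their difference $\mathbf{C}$, and they are controlled by a hypothesis on their direction, not their size.

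This is also why your Cauchy--Schwarz treatment of the remainder is weaker. It discards the sign, so a bound like $(c\rho-\epsilon)/(c+\epsilon)$ is informative only for small relative noise. Moreover, $\epsilon$ should be normalized by $\|\mathbf{\Sigma}^k\|_F$, not $\|\mathbf{\Sigma}\|_F^k$. The paper's bound, by contrast, holds for every noise-to-signal ratio $r$.
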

To empirically validate Theorem~\ref{thm:alignment}, we first estimate $\mathbb{E}[\mathbf{x}\mathbf{x}^\top]$ by its empirical mean, $\mathbb{E}[\mathbf{x}\mathbf{x}^\top] \approx \frac{1}{N} \mathbf{X}^{(t)} (\!\mathbf{X}^{(t)}\!)^\top$, 
where $
\mathbf{X}^{(t)}\!=\!\left[\! \mathbf{x}_1^{(t)}\!,\dots,\mathbf{x}_N^{(t)}\!\right]
$ is collected the same as in Eq.~\ref{eq:local_activation} from $N$ calibration samples.
We quantify the correlation between $\mathbf{A}\!=\!\frac{1}{N}\mathbf{X}^{(t)}(\!\mathbf{X}^{(t)}\!)^\top$ and $\mathbf{B} = (\!\mathbf{U}^{(t)}\!)^\top\mathbf{U}^{(t)}$ by using the Frobenius cosine-similarity:  $\cos_F(\mathbf{A},\mathbf{B})$. Figure~\ref{fig:macro_alignment} reports the task-level cosine similarity scores and its means on the 8-task vision and 5-task language benchmarks across four different backbone architectures. The positive correlation scores (>0.26) consistently across four benchmarks empirically corroborate our theoretical analysis.

\textbf{Data-Free MAP Estimate of $\mathbf{U}$.} According to Theorem~\ref{thm:alignment}, we further approximate $\mathbf{X}^{(t)}(\mathbf{X}^{(t)})^\top \approx c (\mathbf{U}^{(t)})^\top \mathbf{U}^{(t)}$, where $c>0$ is a positive scalar thanks to the positive correlation between activation statistics and task vectors. Along with the forward relation $\mathbf{Y}^{(t)} = \mathbf{U}^{(t)}\mathbf{X}^{(t)}$, we can derive a data-free MAP estimate of $\mathbf{U}$ by inserting them into Eq.~\eqref{eq:closed_form}, which yields:
\begin{equation}
\mathbf{U}_{\text{MAP}}^*(\tilde{\lambda}) = \left( \sum_{t=1}^T \mathbf{U}^{(t)} (\mathbf{U}^{(t)})^\top \mathbf{U}^{(t)} + \tilde{\lambda} \mathbf{U}^{(0)} \right) \left( \sum_{t=1}^T (\mathbf{U}^{(t)})^\top \mathbf{U}^{(t)} + \tilde{\lambda} \mathbf{I} \right)^{-1},
\label{eq:data_free_closed_form}
\end{equation}
with $\tilde{\lambda} = \lambda / c$. This estimate bypasses the need of a calibration set to assess the local activation data $\{\mathbf{X}, \mathbf{Y}\}$ for model merging, and the merged task vector $\mathbf{U}$ depends merely on the task-specific task vectors $\{\mathbf{U}^{(t)}\}_{t=1}^T$. Similarly, $\tilde{\lambda}$ can be included in $\boldsymbol{\lambda}$ and tuned by BO as shown in Algorithm~\ref{alg:overall}.

\section{Experiments}
\label{sec:experiments}
We evaluate BMM for both vision tasks and language tasks across four backbone architectures with seven different anchor models, which are also the baseline models for performance comparison. Ablation studies are conducted to further validate the main components and design choices of BMM. All our experiments are conducted on a server equipped with 8 NVIDIA RTX-6000 48GB GPUs.

\subsection{Experimental Settings}\vspace{-5pt}
\label{sec:experiments_setting}
\paragraph{Datasets and Models.}
For \textit{vision} tasks, we follow the scalability evaluation settings of
~\cite{gargiulo2025tsv,marczak2025iso}, covering 8-, 14-, and
20-task merging scenarios with ViT-B/32 and ViT-L/14 backbones. For
\textit{language} tasks, following \cite{he2025mergebench}, we evaluate BMM for
5-task merging based on Llama-3.2-3B and
Llama-3.1-8B. To avoid data leakage, we keep the training, validation, and test
splits disjoint across tasks. Validation data are used only for hyperparameter search, and test data are used exclusively for final evaluation. Full benchmark descriptions, metrics, and asset/license info are provided in Appendix~\ref{app:exp-protocol}.

\paragraph{Baselines.} For performance comparison, we choose six model merging methods, including RegMean~\cite{jin2023dataless}, a data-assisted method that relies on an unlabeled calibration set to guide the merging, and five data-free merging methods: TA~\cite{ilharco2023task}, TIES~\cite{yadav2023ties}, TSV~\cite{gargiulo2025tsv}, WUDI-Merging~\cite{cheng2025wudi}, and ISO-CTS~\cite{marczak2025iso}. The generated models are also used as the anchors that serve as the prior knowledge for BMM.  Additionally, we consider an anchor model that is the original pre-trained backbone, such that $\mathbf{U}^{(0)}=\mathbf{0}$, indicating no prior knowledge is used for BMM. We compare BMM with all seven aforementioned anchors as well as the individual fine-tuned models for performance comparison.

\paragraph{Experiment Details.} Following RegMean~\cite{jin2023dataless}, our data-assisted BMM collects local activation data $\mathcal{D}_{\text{act}} = \{\mathbf{X}_m, \mathbf{Y}_m\}_{m=1}^M$ using 128 and 1,000 samples per task for ViT and Llama, respectively. We partition the networks into $B=3$ (ViT) and $B=1$ (Llama) sequential blocks. Each block has 5 hyperparameters to tune: one scale $s \in [1.0, 1.3]$ (Eq.~\ref{eq:merged_w}) and four group-wise regularization strengths $\lambda$'s (attention-in/out and MLP-in/out). We sample each $\lambda$ in the log-scale uniformly from $[10^{-4}, 1]$ for ViT models and $[10^{-3}, 100]$ for Llama models. With BO budgets of merely $K\!=\!200$ (ViT) and $K\!=\!100$ (Llama) trials, optimizing the full 15D/5D spaces requires fewer validation evaluations than a uniform $15 \times 15$ grid search in 2D space (225 trials), ensuring computational efficiency. Additional complexity analysis and wall-clock runtime breakdowns are provided in Appendix~\ref{app:runtime}.

\input{tables/main_vision.tex}
\input{tables/main_language.tex}

\subsection{Main Results}\vspace{-5pt}
\paragraph{Vision Benchmarks (ViT).}
Table~\ref{tab:merged_performance_comparison} reports the performance of BMM on vision tasks with ViT architectures. As can be seen, BMM is an effective \emph{plug-and-play} merging method. Across all evaluated anchors, model capacities, and task scales, BMM yields noticeable improvements in both data-assisted and data-free settings. The gains are especially large over weaker anchors, such as TA and TIES, with average relative improvements of $21.1\%$ and $16.2\%$, respectively. At the same time, BMM also consistently outperforms strong recent anchors such as WUDI-Merging and ISO-CTS.

In addition, BMM remains effective as the number of merged tasks increases. For example, in the 20-task setting with ViT-B/32, BMM with ISO-CTS as anchor improves the merging performance from $77.6\%$ to $82.8\%$ in the data-assisted setting and to $81.5\%$ in the data-free setting. Similar gains are also observed on ViT-L/14, suggesting that BMM can effectively alleviate task interference in larger-scale merging settings.

Finally, BMM brings the performance of model merging close to that of individual
task-specific experts. On the ViT-L/14 8-task benchmark, BMM with ISO-CTS as anchor achieves $95.1\%$ in the data-assisted setting and $95.0\%$ in the data-free setting, closely matching the average performance ($95.8\%$) of the eight individual fine-tuned models. Detailed experimental results of mean $\pm$ std over five random seeds and the per-task breakdowns are provided in Appendices~\ref{app:detailed-aggregate} and~\ref{app:vit-pertask}, respectively.

\paragraph{Language Benchmarks (Llama).}
Similar to the vision benchmarks, Table~\ref{tab:nlp_main} shows that BMM consistently improves performance across all anchor models on the 5-task language benchmarks. The gains are especially pronounced for weaker anchors, with the largest improvement reaching $21.0\%$ for RegMean on Llama-3.2-3B in the data-free setting. Importantly, BMM also pushes already competitive anchors to stronger results. For instance, in the data-free setting, BMM improves the TSV anchor from $0.471$ to $0.486$ on Llama-3.2-3B, achieving the best 3B result among all the merging methods. On Llama-3.1-8B, BMM improves the TSV anchor from $0.557$ to $0.573$, while the data-assisted BMM with TIES reaches the overall best 8B performance of $0.579$. Interestingly, the data-free BMM is particularly strong on the 3B model and often outperforms its data-assisted counterpart, suggesting that the proposed alignment-based approximation can generalize effectively even without auxiliary calibration data. Category-level language breakdowns are provided in Appendix~\ref{app:vit-pertask}.


\input{tables/ablation_BO}

\subsection{Ablation Study}\vspace{-5pt}
\paragraph{Effectiveness of BO-based global coordination.}
To investigate the effectiveness of BO-based global coordination, Table~\ref{tab:bo_ablation} compares BMM (w/ \textit{BO}) against two hyperparameter-tuning baselines: \textit{shared-}$\lambda$ and \textit{random search}. While \textit{shared-}$\lambda$ assigns a single regularization strength $\lambda$ to all four module groups and optimizes it via an extensive grid search, \textit{random search} relaxes this constraint by allowing module-specific hyperparameters, but tunes them using 200 trials of random search.

As can be seen from Table~\ref{tab:bo_ablation}, \textit{shared-}$\lambda$ already yields substantial improvements over the anchor baselines, confirming that the closed-form MAP estimator is the primary source of performance gains. Relaxing the shared constraint to module-specific hyperparameters with \textit{random search} further improves the performance, validating the importance of heterogeneous regularization across different module groups. Finally, BMM (w/ \textit{BO}) achieves the best overall results by effectively coordinating the module-specific hyperparameters with guided search. Its advantage is most pronounced in the challenging 20-task data-free setting. On the TSV anchor, BMM (w/ \textit{BO}) reaches an accuracy of $79.7\%$, compared with $79.0\%$ for \textit{random search} and $77.9\%$ for \textit{shared-$\lambda$} tuning. This suggests that BO becomes particularly useful when the search space is more complex and the validation signal needs to balance task interference effectively in large-scale merging settings.

\paragraph{Sensitivity to the Fraction of Validation Set.}
Figure~\ref{fig:bmm-ablation} (left) reports the final test performance as the fraction of validation set used for BO evolves. Across both TSV and ISO-CTS anchors, and in data-assisted and data-free settings, BMM consistently outperforms the strongest anchor baseline (ISO-CTS) even when a small fraction of the validation set is used. The performance curves are relatively stable from $10\%$ to $100\%$ of the validation set, indicating that the outer-loop optimization is not overly sensitive to the validation-set size. This suggests that BMM can use a small validation subset to reduce the optimization overhead while preserving most of the gains from global hyperparameter search. 

\paragraph{Sensitivity to BO Budget $K$.}
Figure~\ref{fig:bmm-ablation} (right) reports the evolution of final test performance as the number of BO trials ($K$) increases. BMM consistently outperforms the strongest anchor baseline (ISO-CTS) under a small BO budget ($K$). The performance curves reach a plateau after roughly $40-60$ trials, while increasing the budget further yields only a marginal return. These results indicate that BO is efficient in global hyperparameter search, reaching stable results with a modest budget. Additional runtime-performance comparisons against other methods are reported in Appendix~\ref{app:runtime-tradeoff}.

\begin{figure}[t]
  \centering
  \includegraphics[width=0.9\linewidth]{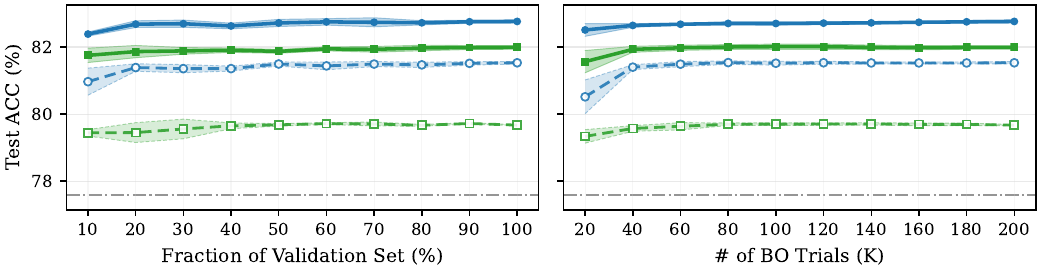}\vspace{-2pt}
\caption{\textbf{Ablation study of BMM on 20-task merging (ViT-B/32).} (\textbf{Left}) Test accuracy as a function of the validation set fraction used for Bayesian Optimization (BO). (\textbf{Right}) Test accuracy vs. the number of BO search trials ($K$). All curves report mean $\pm$ std across 5 seeds. Solid and circle-dashed lines represent data-assisted and data-free BMM. Blue/green colors indicate ISO-CTS/TSV anchors. The horizontal gray dot-dashed line establishes the ISO-CTS baseline.}
  \label{fig:bmm-ablation}\vspace{-5pt}
\end{figure}

\section{Conclusion}\vspace{-5pt}
This paper introduces \textbf{Bayesian Model Merging (BMM)}, a \emph{plug-and-play} framework for model merging. BMM leverages existing merging solutions as informative priors and formulates module-wise merging as an anchor-regularized Bayesian linear regression with a closed-form solution. To coordinate local merging decisions with end-to-end performance, BMM further adopts a $\mathcal{GP}$-based Bayesian optimization to search for regularization strengths globally.
We also derive a data-free variant of BMM based on a key alignment between activation statistics and task vectors, while preserving the closed-form solution. Across vision and language benchmarks, BMM consistently improves diverse anchor baselines, scales to challenging 20-task settings, and closely approaches the performance of individual fine-tuned experts on high-capacity ViT architectures. Our results show that combining informative anchors with global hyperparameter coordination provides a practical path towards scalable model merging when auxiliary calibration data is limited or unavailable.

\paragraph{Limitations and Broader Impacts.}
BMM does not exhibit major limitations in the evaluated settings, though our experiments are currently limited to models up to the 8B scale due to computational constraints. Regarding the broader impacts, BMM enables efficient aggregation of task-specific experts without access to original training data, reducing the cost of foundation-model customization. We do not foresee any societal risks beyond those generally associated with large vision and language models, although merged models may inherit biases, safety risks, or domain-specific failures from their underlying experts.


\bibliographystyle{unsrt}  
\bibliography{ref}
\clearpage
\appendix

\input{sections/appendix}
\end{document}

%% file: figures/figure1.tex
\begin{figure}[t]
    \centering
    \begin{minipage}[b]{0.48\textwidth}
        \centering
        \resizebox{\linewidth}{!}{
        \begin{tikzpicture}[>=Stealth, thick]
        \tikzstyle{obs} = [circle, draw, fill=gray!20, minimum size=0.9cm, inner sep=0pt, font=\normalsize]
        \tikzstyle{latent} = [circle, draw, fill=white, minimum size=0.9cm, inner sep=0pt, font=\normalsize]
        \tikzstyle{plate} = [draw, rectangle, rounded corners=6pt, inner xsep=10pt, inner ysep=14pt]
        \tikzstyle{alg} = [rectangle, draw=blue!60, fill=blue!5, rounded corners=4pt, minimum height=0.8cm, minimum width=2.1cm, font=\small, align=center, text=blue!80!black]
        
        \tikzstyle{terminal} = [circle, draw, fill=white, inner sep=1.5pt]
        \tikzstyle{pivot} = [circle, fill=black, inner sep=1.5pt]

        \node[latent] (lambda) at (0, 1.5)    {$\boldsymbol{\lambda}$};
        \node[latent] (W)      at (0, 0)      {$\mathbf{U}_m$};
        \node[obs]    (W0)     at (-2.2, 0)   {$\mathbf{U}_m^{(0)}$};
        \node[obs]    (Dval)   at (3.4, 0)    {$\mathcal{D}_{\mathrm{val}}$};

        \node[obs]    (Y)      at (-1.2, -2.0) {$\mathbf{Y}_m$};
        \node[obs]    (X)      at (-2.8, -2.0) {$\mathbf{X}_m$};
        \draw[->] (X) -- (Y); 

        \node[obs]    (Ut)     at (1.2, -2.0)  {\small $\{\mathbf{U}_m^{(t)}\}$};

        \node[plate, fit=(W0) (W) (X) (Y) (Ut)] (plate) {};
        
        \node[anchor=south west, xshift=45pt, yshift=-1pt, font=\normalsize] at (plate.south west) {$m=\{1,\cdots, M\}$};

        \draw[->] (lambda) -- (W);
        \draw[->] (W0)     -- (W);
        \draw[->] (W)      -- (Dval); 

        
        \node[pivot] (switch_pivot) at (0, -0.7) {};
        \draw[-] (W) -- (switch_pivot);

        \node[terminal] (term_L) at (-0.5, -1.1) {};
        \node[terminal] (term_R) at (0.5, -1.1) {};

        \draw[->] (term_L) -- node[left, font=\scriptsize, xshift=-2pt, yshift=2pt] {data-assisted} (Y);
        \draw[->] (term_R) -- node[right, font=\scriptsize, xshift=2pt, yshift=2pt] {data-free} (Ut);

        \draw[-, ultra thick, black!70] (switch_pivot) -- (term_L);

        \draw[<->, densely dashed, gray, bend right=40] (term_L) to (term_R);

        \node[alg] (BO) at (3.4, 1.5) {Predictive Evidence \\\& BO update};
        \draw[->, densely dashed, blue!80!black] (Dval) -- (BO);
        \draw[->, densely dashed, blue!80!black] (BO)   -- (lambda);

        \end{tikzpicture}
        }
        \vspace{-15pt} 
    \end{minipage}
    \hfill
    \begin{minipage}[b]{0.48\textwidth}
        \centering
        \includegraphics[width=\linewidth]{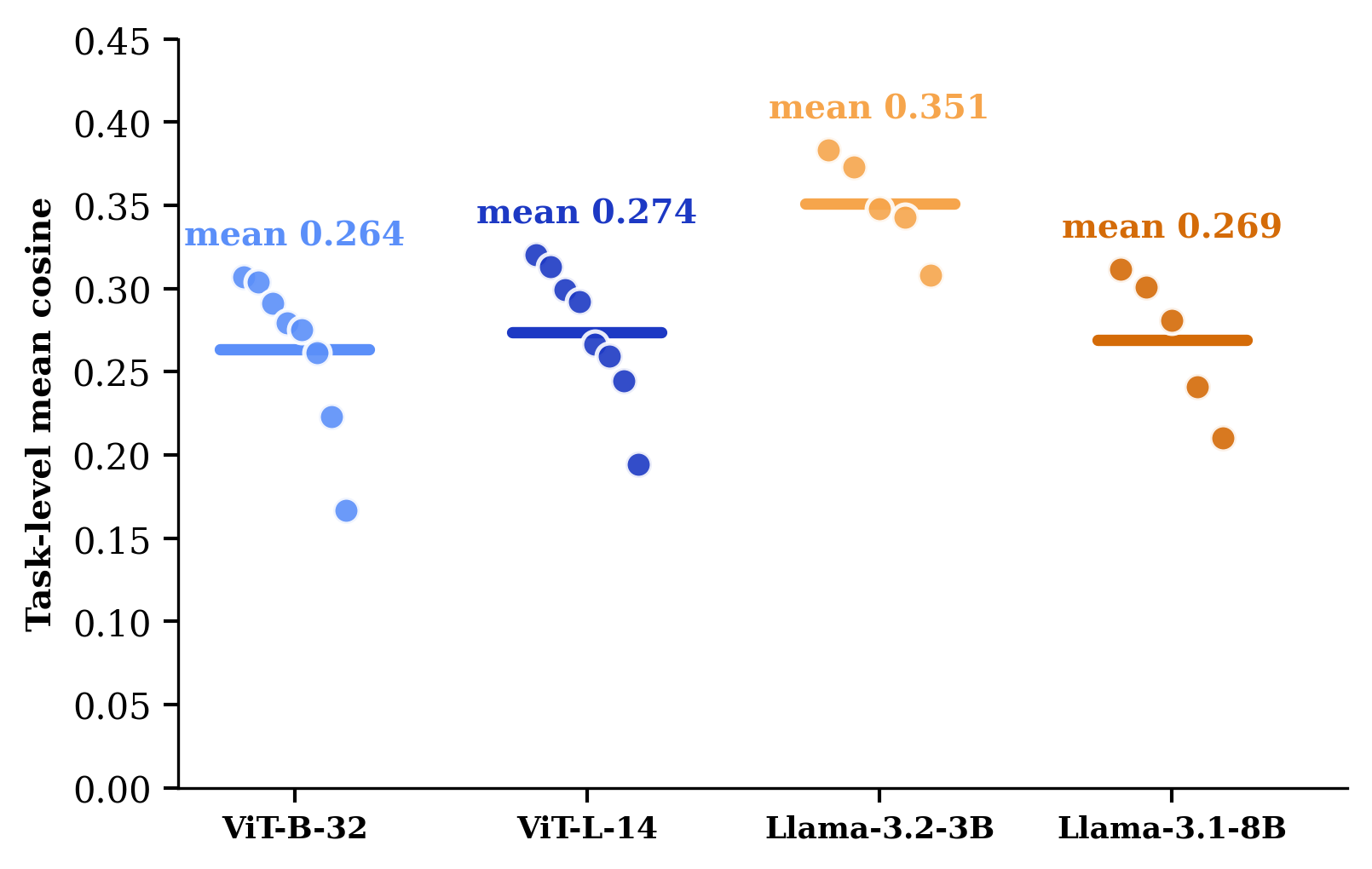}
        \vspace{-20pt} 
    \end{minipage}
    \vspace{-5pt} 
    \begin{minipage}[t]{0.48\textwidth}
    \caption{\textbf{Probabilistic formulation of BMM.} The framework can adopt different observation sources: empirical activations (data-assisted) or expert weight-induced surrogates (data-free). The inner MAP estimate is solved in closed-form, while the outer loop optimizes $\boldsymbol{\lambda}$ by BO.}
        \label{fig:bmm_arch}
    \end{minipage}
    \hfill
    \begin{minipage}[t]{0.48\textwidth}
        \caption{\textbf{Alignment between activation statistics and task-vectors} by empirical verification of Theorem~\ref{thm:alignment}. The consistent positive cosine similarity scores across vision and language backbones corroborates the proposed alignment between activation statistics and task-vectors.}
        \label{fig:macro_alignment}
    \end{minipage}
\end{figure}

%% file: tables/main_vision.tex
\begin{table*}[t]
\caption{Model merging on vision tasks with ViT architectures. The blue subscript values indicate the \textbf{relative percentage improvement} compared to the corresponding \textit{anchor}. (Indiv.: Individual)}
\label{tab:merged_performance_comparison}\vspace{-5pt}
\centering

\definecolor{DeepBlue}{HTML}{00509E}
\definecolor{LightCyan}{HTML}{E0FFFF} 

\newcommand{\up}[1]{\textsubscript{\textcolor{DeepBlue}{+#1}}}
\newcommand{\upavg}[1]{{\footnotesize\textcolor{DeepBlue}{+#1}}}

\setlength{\tabcolsep}{3.5pt}

\resizebox{\textwidth}{!}{
\begin{tabular}{l c l c *{7}{r@{}l}}
\toprule
\textbf{Model} & \textbf{Tasks} & \textbf{Setting} & \textbf{Indiv.} &
\multicolumn{2}{c}{\textbf{Pretrained}} & \multicolumn{2}{c}{\textbf{TA}} &
\multicolumn{2}{c}{\textbf{TIES}} & \multicolumn{2}{c}{\textbf{RegMean}} &
\multicolumn{2}{c}{\textbf{TSV}} & \multicolumn{2}{c}{\textbf{WUDI}} &
\multicolumn{2}{c}{\textbf{ISO-CTS}} \\
\midrule

\multirow{9}{*}{ViT-B/32}
 & \multirow{3}{*}{8}
 & \cellcolor{gray!10}anchor   & \cellcolor{gray!10}92.8 
 & \cellcolor{gray!10} & \cellcolor{gray!10}\llap{47.7}
 & \cellcolor{gray!10} & \cellcolor{gray!10}\llap{70.4}
 & \cellcolor{gray!10} & \cellcolor{gray!10}\llap{75.7}
 & \cellcolor{gray!10} & \cellcolor{gray!10}\llap{82.3}
 & \cellcolor{gray!10} & \cellcolor{gray!10}\llap{85.9}
 & \cellcolor{gray!10} & \cellcolor{gray!10}\llap{\textbf{87.0}}
 & \cellcolor{gray!10} & \cellcolor{gray!10}\llap{86.4} \\
 & & data-assisted & - 
 & 84.8 & \up{77.8} & 85.0 & \up{20.7} & 85.7 & \up{13.1} & 85.2 & \up{3.5} & 88.9 & \up{3.5} & 89.4 & \up{2.8} & \textbf{90.2} & \up{4.3} \\
 & & data-free     & - 
 & 87.9 & \up{84.4} & 87.0 & \up{23.6} & 87.0 & \up{14.9} & 87.9 & \up{6.8} & 87.8 & \up{2.2} & 87.6 & \up{0.7} & \textbf{89.0} & \up{3.0} \\
\cmidrule{2-18}
 & \multirow{3}{*}{14}
 & \cellcolor{gray!10}anchor   & \cellcolor{gray!10}90.9 
 & \cellcolor{gray!10} & \cellcolor{gray!10}\llap{56.9}
 & \cellcolor{gray!10} & \cellcolor{gray!10}\llap{65.2}
 & \cellcolor{gray!10} & \cellcolor{gray!10}\llap{68.2}
 & \cellcolor{gray!10} & \cellcolor{gray!10}\llap{76.6}
 & \cellcolor{gray!10} & \cellcolor{gray!10}\llap{79.9}
 & \cellcolor{gray!10} & \cellcolor{gray!10}\llap{80.5}
 & \cellcolor{gray!10} & \cellcolor{gray!10}\llap{\textbf{81.5}} \\
 & & data-assisted & - 
 & 78.9 & \up{38.8} & 79.2 & \up{21.5} & 79.7 & \up{16.8} & 79.1 & \up{3.2} & 84.3 & \up{5.5} & 84.6 & \up{5.1} & \textbf{85.5} & \up{4.9} \\
 & & data-free     & - 
 & 82.7 & \up{45.4} & 82.4 & \up{26.4} & 82.3 & \up{20.6} & 82.9 & \up{8.2} & 82.8 & \up{3.7} & 81.7 & \up{1.5} & \textbf{84.3} & \up{3.5} \\
\cmidrule{2-18}
 & \multirow{3}{*}{20}
 & \cellcolor{gray!10}anchor   & \cellcolor{gray!10}91.3 
 & \cellcolor{gray!10} & \cellcolor{gray!10}\llap{55.7}
 & \cellcolor{gray!10} & \cellcolor{gray!10}\llap{60.4}
 & \cellcolor{gray!10} & \cellcolor{gray!10}\llap{64.0}
 & \cellcolor{gray!10} & \cellcolor{gray!10}\llap{72.2}
 & \cellcolor{gray!10} & \cellcolor{gray!10}\llap{76.9}
 & \cellcolor{gray!10} & \cellcolor{gray!10}\llap{76.1}
 & \cellcolor{gray!10} & \cellcolor{gray!10}\llap{\textbf{77.6}} \\
 & & data-assisted & - 
 & 75.6 & \up{35.7} & 75.5 & \up{25.0} & 76.3 & \up{19.2} & 75.9 & \up{5.0} & 82.0 & \up{6.7} & 82.2 & \up{8.0} & \textbf{82.8} & \up{6.6} \\
 & & data-free     & - 
 & 80.0 & \up{43.5} & 78.4 & \up{29.8} & 79.2 & \up{23.7} & 79.7 & \up{10.4} & 79.7 & \up{3.7} & 78.0 & \up{2.6} & \textbf{81.5} & \up{5.0} \\

\midrule
\multirow{9}{*}{ViT-L/14}
 & \multirow{3}{*}{8}
 & \cellcolor{gray!10}anchor   & \cellcolor{gray!10}95.8 
 & \cellcolor{gray!10} & \cellcolor{gray!10}\llap{65.1}
 & \cellcolor{gray!10} & \cellcolor{gray!10}\llap{84.8}
 & \cellcolor{gray!10} & \cellcolor{gray!10}\llap{87.0}
 & \cellcolor{gray!10} & \cellcolor{gray!10}\llap{90.0}
 & \cellcolor{gray!10} & \cellcolor{gray!10}\llap{93.0}
 & \cellcolor{gray!10} & \cellcolor{gray!10}\llap{94.0}
 & \cellcolor{gray!10} & \cellcolor{gray!10}\llap{\textbf{94.8}} \\
 & & data-assisted & - 
 & 92.3 & \up{41.7} & 92.8 & \up{9.4} & 93.1 & \up{7.1} & 92.7 & \up{2.9} & 94.4 & \up{1.6} & 94.7 & \up{0.7} & \textbf{95.1} & \up{0.3} \\
 & & data-free     & - 
 & 94.4 & \up{44.9} & 94.2 & \up{11.0} & 94.2 & \up{8.3} & 94.0 & \up{4.5} & 94.4 & \up{1.5} & 94.3 & \up{0.3} & \textbf{95.0} & \up{0.2} \\
\cmidrule{2-18}
 & \multirow{3}{*}{14}
 & \cellcolor{gray!10}anchor   & \cellcolor{gray!10}94.3 
 & \cellcolor{gray!10} & \cellcolor{gray!10}\llap{68.5}
 & \cellcolor{gray!10} & \cellcolor{gray!10}\llap{79.4}
 & \cellcolor{gray!10} & \cellcolor{gray!10}\llap{80.2}
 & \cellcolor{gray!10} & \cellcolor{gray!10}\llap{85.5}
 & \cellcolor{gray!10} & \cellcolor{gray!10}\llap{89.1}
 & \cellcolor{gray!10} & \cellcolor{gray!10}\llap{90.5}
 & \cellcolor{gray!10} & \cellcolor{gray!10}\llap{\textbf{91.0}} \\
 & & data-assisted & - 
 & 88.0 & \up{28.5} & 88.3 & \up{11.2} & 88.3 & \up{10.1} & 88.2 & \up{3.2} & 91.6 & \up{2.7} & 91.7 & \up{1.3} & \textbf{92.2} & \up{1.4} \\
 & & data-free     & - 
 & 90.9 & \up{32.8} & 90.9 & \up{14.5} & 90.8 & \up{13.3} & 90.6 & \up{6.0} & 91.3 & \up{2.4} & 91.0 & \up{0.5} & \textbf{92.1} & \up{1.2} \\
\cmidrule{2-18}
 & \multirow{3}{*}{20}
 & \cellcolor{gray!10}anchor   & \cellcolor{gray!10}94.7 
 & \cellcolor{gray!10} & \cellcolor{gray!10}\llap{65.4}
 & \cellcolor{gray!10} & \cellcolor{gray!10}\llap{74.0}
 & \cellcolor{gray!10} & \cellcolor{gray!10}\llap{76.7}
 & \cellcolor{gray!10} & \cellcolor{gray!10}\llap{82.8}
 & \cellcolor{gray!10} & \cellcolor{gray!10}\llap{87.7}
 & \cellcolor{gray!10} & \cellcolor{gray!10}\llap{88.4}
 & \cellcolor{gray!10} & \cellcolor{gray!10}\llap{\textbf{90.1}} \\
 & & data-assisted & - 
 & 86.0 & \up{31.5} & 86.2 & \up{16.5} & 86.4 & \up{12.6} & 86.2 & \up{4.1} & 90.9 & \up{3.6} & 90.8 & \up{2.7} & \textbf{91.5} & \up{1.5} \\
 & & data-free     & - 
 & 89.8 & \up{37.3} & 89.6 & \up{21.1} & 89.5 & \up{16.6} & 89.4 & \up{7.9} & 90.3 & \up{3.0} & 89.7 & \up{1.5} & \textbf{91.1} & \up{1.1} \\

\midrule
\rowcolor{LightCyan}
\multicolumn{2}{l}{} & data-assisted & - 
& & \llap{\upavg{42.3}} & & \llap{\upavg{17.4}} & & \llap{\upavg{13.1}} & & \llap{\upavg{3.7}} & & \llap{\upavg{3.9}} & & \llap{\upavg{3.4}} & & \llap{\upavg{3.2}} \\
\rowcolor{LightCyan}
\multicolumn{2}{l}{\multirow{-2}{*}{\textbf{Avg. Improvement}}} & data-free & - 
& & \llap{\upavg{48.0}} & & \llap{\upavg{21.1}} & & \llap{\upavg{16.2}} & & \llap{\upavg{7.3}} & & \llap{\upavg{2.7}} & & \llap{\upavg{1.2}} & & \llap{\upavg{2.3}} \\

\bottomrule
\end{tabular}
}
\end{table*}

%% file: tables/main_language.tex
\begin{table*}[t]
\caption{Model merging on language tasks with Llama architectures. The blue subscript values indicate the \textbf{relative percentage improvement} compared to the corresponding \textit{anchor}.}
\label{tab:nlp_main}\vspace{-5pt}
\centering

\definecolor{DeepBlue}{HTML}{00509E}
\definecolor{LightCyan}{rgb}{0.88,1,1}

\newcommand{\nlpup}[1]{\textsubscript{\textcolor{DeepBlue}{+#1}}}
\newcommand{\nlpdown}[1]{\textsubscript{\textcolor{red}{-#1}}}
\newcommand{\nlpupavg}[1]{{\footnotesize\textcolor{DeepBlue}{+#1}}}

\setlength{\tabcolsep}{3.5pt}

\resizebox{\textwidth}{!}{
\begin{tabular}{ll c *{7}{r@{}l}}
\toprule
\textbf{Model} & \textbf{Setting} & \textbf{Indiv.} & 
\multicolumn{2}{c}{\textbf{Pretrained}} & \multicolumn{2}{c}{\textbf{TA}} & 
\multicolumn{2}{c}{\textbf{TIES}} & \multicolumn{2}{c}{\textbf{RegMean}} & 
\multicolumn{2}{c}{\textbf{TSV}} & \multicolumn{2}{c}{\textbf{WUDI}} & 
\multicolumn{2}{c}{\textbf{ISO-CTS}} \\
\midrule

\multirow{3}{*}{Llama-3.2-3B}
 & \cellcolor{gray!10}anchor   & \cellcolor{gray!10}0.499 
 & \cellcolor{gray!10} & \cellcolor{gray!10}\llap{0.301} 
 & \cellcolor{gray!10} & \cellcolor{gray!10}\llap{0.438} 
 & \cellcolor{gray!10} & \cellcolor{gray!10}\llap{0.435} 
 & \cellcolor{gray!10} & \cellcolor{gray!10}\llap{0.377} 
 & \cellcolor{gray!10} & \cellcolor{gray!10}\llap{\textbf{0.471}} 
 & \cellcolor{gray!10} & \cellcolor{gray!10}\llap{0.461} 
 & \cellcolor{gray!10} & \cellcolor{gray!10}\llap{0.441} \\
\cmidrule{2-17}
 & data-assisted & - 
 & 0.302 & \nlpup{0.3} & 0.448 & \nlpup{2.3} & 0.447 & \nlpup{2.8} & 0.406 & \nlpup{7.7} & 0.478 & \nlpup{1.5} & \textbf{0.479} & \nlpup{3.9} & 0.454 & \nlpup{2.9} \\
 & data-free     & - 
 & 0.303 & \nlpup{0.7} & 0.455 & \nlpup{3.9} & 0.468 & \nlpup{7.6} & 0.456 & \nlpup{21.0} & \textbf{0.486} & \nlpup{3.2} & 0.478 & \nlpup{3.7} & 0.457 & \nlpup{3.6} \\

\midrule

\multirow{3}{*}{Llama-3.1-8B}
 & \cellcolor{gray!10}anchor   & \cellcolor{gray!10}0.630 
 & \cellcolor{gray!10} & \cellcolor{gray!10}\llap{0.385} 
 & \cellcolor{gray!10} & \cellcolor{gray!10}\llap{0.541} 
 & \cellcolor{gray!10} & \cellcolor{gray!10}\llap{0.554} 
 & \cellcolor{gray!10} & \cellcolor{gray!10}\llap{0.526} 
 & \cellcolor{gray!10} & \cellcolor{gray!10}\llap{0.557} 
 & \cellcolor{gray!10} & \cellcolor{gray!10}\llap{\textbf{0.560}} 
 & \cellcolor{gray!10} & \cellcolor{gray!10}\llap{0.556} \\
\cmidrule{2-17}
 & data-assisted & - 
 & 0.387 & \nlpup{0.5} & 0.568 & \nlpup{5.0} & \textbf{0.579} & \nlpup{4.5} & 0.530 & \nlpup{0.8} & 0.574 & \nlpup{3.1} & 0.564 & \nlpup{0.7} & 0.576 & \nlpup{3.6} \\
 & data-free     & - 
 & 0.387 & \nlpup{0.5} & 0.568 & \nlpup{5.0} & 0.558 & \nlpup{0.7} & 0.528 & \nlpup{0.4} & \textbf{0.573} & \nlpup{2.9} & 0.561 & \nlpup{0.2} & 0.564 & \nlpup{1.4} \\

\midrule


\rowcolor{LightCyan}
 & data-assisted & - 
 & & \llap{\nlpupavg{0.4}} & & \llap{\nlpupavg{3.6}} & & \llap{\nlpupavg{3.6}} & & \llap{\nlpupavg{4.2}} & & \llap{\nlpupavg{2.3}} & & \llap{\nlpupavg{2.3}} & & \llap{\nlpupavg{3.3}} \\

\rowcolor{LightCyan}
\multirow{-2}{*}{\textbf{Avg. Improvement}} & data-free     & - 
 & & \llap{\nlpupavg{0.6}} & & \llap{\nlpupavg{4.4}} & & \llap{\nlpupavg{4.2}} & & \llap{\nlpupavg{10.7}} & & \llap{\nlpupavg{3.0}} & & \llap{\nlpupavg{1.9}} & & \llap{\nlpupavg{2.5}} \\


\bottomrule
\end{tabular}
}\vspace{0pt}
\end{table*}

%% file: tables/ablation_BO.tex
\begin{table}[t]
\centering
\caption{Different hyperparameter search methods (\textit{shared-}$\lambda$, \textit{random search}, and \textit{BO}) and their impacts on 8-, 14-, and 20-task merging with ViT-B/32 architecture. Results for data-assisted and data-free variants are reported as mean $\pm$ std over seeds 0--4.}
\label{tab:bo_ablation}
\vspace{2pt}
\resizebox{\textwidth}{!}{
\small
\setlength{\tabcolsep}{3pt} 
\begin{tabular}{ll |ccc |ccc |ccc}
\toprule
\multirow{2}{*}{\textbf{Setting}} & \multirow{2}{*}{\textbf{Variant}} & \multicolumn{3}{c}{\textbf{8 Tasks}} & \multicolumn{3}{|c}{\textbf{14 Tasks}} & \multicolumn{3}{|c}{\textbf{20 Tasks}} \\
\cmidrule(lr){3-5} \cmidrule(lr){6-8} \cmidrule(lr){9-11}
& & \textbf{TSV} & \textbf{WUDI} & \textbf{ISO-CTS} & \textbf{TSV} & \textbf{WUDI} & \textbf{ISO-CTS} & \textbf{TSV} & \textbf{WUDI} & \textbf{ISO-CTS} \\
\midrule
anchor & - & 85.9 & 87.0 & 86.4 & 79.9 & 80.5 & 81.5 & 76.9 & 76.1 & 77.6 \\
\midrule
\multirow{3}{*}{data-assisted}
& shared-$\lambda$ & $88.1_{\pm0.01}$ & $89.0_{\pm0.01}$ & $89.3_{\pm0.01}$ & $83.2_{\pm0.00}$ & $83.5_{\pm0.00}$ & $83.9_{\pm0.00}$ & $80.1_{\pm0.01}$ & $80.6_{\pm0.01}$ & $80.3_{\pm0.02}$ \\
& random            & $88.6_{\pm0.09}$ & $89.2_{\pm0.02}$ & $89.7_{\pm0.24}$ & $83.7_{\pm0.11}$ & $83.9_{\pm0.30}$ & $84.9_{\pm0.07}$ & $80.6_{\pm0.18}$ & $81.2_{\pm0.32}$ & $81.7_{\pm0.37}$ \\
& \cellcolor{LightCyan} BO & \cellcolor{LightCyan} $\mathbf{88.9}_{\pm\mathbf{0.05}}$ & \cellcolor{LightCyan} $\mathbf{89.4}_{\pm\mathbf{0.02}}$ & \cellcolor{LightCyan} $\mathbf{90.2}_{\pm\mathbf{0.03}}$ & \cellcolor{LightCyan} $\mathbf{84.3}_{\pm\mathbf{0.04}}$ & \cellcolor{LightCyan} $\mathbf{84.6}_{\pm\mathbf{0.04}}$ & \cellcolor{LightCyan} $\mathbf{85.5}_{\pm\mathbf{0.06}}$ & \cellcolor{LightCyan} $\mathbf{82.0}_{\pm\mathbf{0.04}}$ & \cellcolor{LightCyan} $\mathbf{82.2}_{\pm\mathbf{0.06}}$ & \cellcolor{LightCyan} $\mathbf{82.8}_{\pm\mathbf{0.05}}$ \\
\midrule
\multirow{3}{*}{data-free}
& shared-$\lambda$ & $87.1_{\pm0.00}$ & $87.1_{\pm0.02}$ & $88.0_{\pm0.00}$ & $81.5_{\pm0.00}$ & $80.7_{\pm0.00}$ & $82.7_{\pm0.00}$ & $77.9_{\pm0.00}$ & $76.7_{\pm0.01}$ & $78.8_{\pm0.00}$ \\
& random            & $87.4_{\pm0.20}$ & $87.4_{\pm0.13}$ & $88.1_{\pm0.23}$ & $82.2_{\pm0.24}$ & $81.2_{\pm0.08}$ & $83.3_{\pm0.18}$ & $79.0_{\pm0.18}$ & $77.3_{\pm0.20}$ & $80.0_{\pm0.33}$ \\
& \cellcolor{LightCyan} BO & \cellcolor{LightCyan} $\mathbf{87.8}_{\pm\mathbf{0.04}}$ & \cellcolor{LightCyan} $\mathbf{87.6}_{\pm\mathbf{0.04}}$ & \cellcolor{LightCyan} $\mathbf{89.0}_{\pm\mathbf{0.03}}$ & \cellcolor{LightCyan} $\mathbf{82.8}_{\pm\mathbf{0.03}}$ & \cellcolor{LightCyan} $\mathbf{81.7}_{\pm\mathbf{0.06}}$ & \cellcolor{LightCyan} $\mathbf{84.3}_{\pm\mathbf{0.05}}$ & \cellcolor{LightCyan} $\mathbf{79.7}_{\pm\mathbf{0.05}}$ & \cellcolor{LightCyan} $\mathbf{78.0}_{\pm\mathbf{0.02}}$ & \cellcolor{LightCyan} $\mathbf{81.5}_{\pm\mathbf{0.04}}$ \\
\bottomrule
\end{tabular}
}
\end{table}

%% file: sections/appendix.tex
\section{Proof of Theorem~\ref{thm:alignment}}
\label{sec:appendix_a}
\label{app:proof}

Let $\mathbf{x}$ denote an input activation to module $\mathbf{W}^{(t)}$ and  $\mathbf{y}=\mathbf{U}^{(t)}\mathbf{x}=(\mathbf{W}^{(t)}-\mathbf{W}_{\mathrm{pre}})\mathbf{x}$ the corresponding residual output. Using standard Stochastic Gradient Descent (SGD) with the $L_2$ regularization,
we define $\mathbf{g}_{\mathbf{y}}=-\nabla_{\mathbf{y}}\ell$ as the
backpropagated descent-direction signal induced by the fine-tune loss
$\ell$. The single-step parameter update for the task vector
$\mathbf{U}^{(t)}$ is governed by:
\begin{equation}
\delta(\mathbf{U}^{(t)})
=
\eta
\left(
\mathbf{g}_{\mathbf{y}}\mathbf{x}^{\top}
-
\rho \mathbf{U}^{(t)}
\right),
\label{eq:sgd_update}
\end{equation}
where $\rho>0$ is the weight decay coefficient, and $\eta>0$ is the learning rate. By defining
$\mathbf{D}^{(t)}=\mathbf{g}_{\mathbf{y}}\mathbf{x}^{\top}$ as the per-sample descent matrix, we have
\begin{equation}
    \delta(\mathbf{U}^{(t)})=
    \eta\big(\mathbf{D}^{(t)}-\rho\mathbf{U}^{(t)}\big).
\end{equation}
In order to connect the fine-tuned checkpoints with the \emph{unobserved} activations, we study the activation statistics at the terminal phase of fine-tuning. Since model merging operates on fully fine-tuned checkpoints rather than the intermediate ones, our goal here is not to characterize the full stochastic training trajectory, but to capture the geometric structure of representations from the final converged solutions. Specifically, we assume the fine-tuning process reaches a state that satisfies the following assumption.

\noindent\textbf{Assumption 1.}
\emph{The fine-tuned checkpoint is assumed to converge on task $t$, and satisfies the following three conditions:}
\vspace{-4pt}
\begin{enumerate}[leftmargin=*, itemsep=1pt, topsep=2pt]
    \item The expected per-sample descent matrix is aligned with the task vector:
$
        \mathbb{E}[\mathbf{D}^{(t)}]
        =
        \rho\mathbf{U}^{(t)}
$, or equivalently, 
    \(\mathbb{E}[\delta(\mathbf{U}^{(t)})]=\mathbf{0}\).

    \item At convergence, the centered descent matrix fluctuations are assumed to retain a
positive Frobenius overlap with the Gram matrix of the mean descent matrix signal. That is, let
    \begin{equation}
    \overline{\mathbf{D}}^{(t)}=\mathbb{E}[\mathbf{D}^{(t)}],\quad
    \mathbf{C}_t=
    \mathbb{E}\!\left[
    (\mathbf{D}^{(t)}-\overline{\mathbf{D}}^{(t)})^{\top}
    (\mathbf{D}^{(t)}-\overline{\mathbf{D}}^{(t)})
    \right],\quad
    \mathbf{M}_t=(\overline{\mathbf{D}}^{(t)})^{\top}\overline{\mathbf{D}}^{(t)}.
    \end{equation}
    We have $\operatorname{cos}_F(\mathbf{C}_t,\mathbf{M}_t) > \alpha_t$, $0< \alpha_t\le 1$,
        where $\operatorname{cos}_F(\mathbf{A},\mathbf{B})=\operatorname{Tr}(\mathbf{A}^{\top}\mathbf{B})/(\|\mathbf{A}\|_F\|\mathbf{B}\|_F)$ is the Frobenius cosine-similarity.
    
    \item The gradient energy factorizes from the second-moment of input activation:
    \begin{equation}
    \mathbb{E}\!\left[
    \|\mathbf{g}_{\mathbf{y}}\|_2^2
    \mathbf{x}\mathbf{x}^{\top}
    \right]
    =
    \mathbb{E}[\|\mathbf{g}_{\mathbf{y}}\|_2^2]\,
\mathbb{E}[\mathbf{x}\mathbf{x}^\top].
    \end{equation}
\end{enumerate}
\vspace{-3pt}
\noindent\emph{All expectations are w.r.t. the stochasticity induced by mini-batch sampling while conditioning on the fine-tuned checkpoint.}

Assumption~1 reflects a local quasi-stationary basin in which the mean of update-drift becomes negligible and the weight norm remains stable. Condition (1) states that the averaged task-specific descent direction is balanced by the learned task vector. This resembles shrinkage around a pretrained solution in \(L_2\)-style transfer learning~\cite{li2018explicit} and is consistent with terminal representation
geometry, where features, weights, and gradients align along task-relevant
directions~\cite{papyan2020prevalence,ziyin2025formation}. Condition (2) weakens exact proportionality to a cosine-similarity alignment. It allows nonzero
minibatch fluctuations, but assumes that their dominant input-side second-moment
geometry remains aligned with the mean task-descent geometry. This is motivated
by evidence that SGD noise is anisotropic and geometry-aware rather than
isotropic~\cite{zhu2019anisotropic,wu2022alignment}; in particular, Wu
et al.~\cite{wu2022alignment} quantify alignment between SGD noise covariance
and Fisher geometry using Frobenius-type alignment factors. Condition (3)
states residual error signals become less structured, motivating a coarse decoupling between gradient energy and input activations in the
spirit of K-FAC-style layer-wise curvature factorizations~\cite{martens2015optimizing}
and representation/gradient geometry views such as AGOP/NFA~\cite{radhakrishnan2024mechanism,beaglehole2024agop}.

\noindent\textbf{Theorem~\ref{thm:alignment} Restated.}
Under Assumption~1, 
the Gram matrix of input activations and the Gram matrix of task-vectors have a positively correlated alignment:
\begin{equation}
\operatorname{cos}_F\!\left(
\mathbb{E}[\mathbf{x}\mathbf{x}^{\top}],
(\mathbf{U}^{(t)})^{\top}\mathbf{U}^{(t)}
\right)
> \alpha_t.
\label{eq:alignment_cos_app}
\end{equation}

\emph{Proof.}
For notational brevity, we omit the task superscript and write
\[
\overline{\mathbf{D}}=\mathbb{E}[\mathbf{D}],\qquad
\mathbf{M}=\overline{\mathbf{D}}^{\top}\overline{\mathbf{D}},\qquad
\mathbf{C}=
\mathbb{E}\!\left[
(\mathbf{D}-\overline{\mathbf{D}})^{\top}
(\mathbf{D}-\overline{\mathbf{D}})
\right].
\]
By the second-moment decomposition,
\begin{equation}
\mathbb{E}[\mathbf{D}^{\top}\mathbf{D}]
=
\mathbf{M}+\mathbf{C}.
\label{eq:second_moment_decomp_app}
\end{equation}
Assumption~1.1 gives
\begin{equation}
\mathbf{M}
=
\rho^2(\mathbf{U}^{(t)})^{\top}\mathbf{U}^{(t)}.
\label{eq:M_U_app}
\end{equation}
Since the positive scaling by \(\rho^2\) does not affect cosine similarity, it
suffices to lower-bound
\[
\operatorname{cos}_F(\mathbf{M}+\mathbf{C},\mathbf{M}).
\]
If \(\mathbf{C}=\mathbf{0}\), this cosine is \(1\) > $\alpha_t$. Otherwise, let
\begin{equation}
 r=\frac{\|\mathbf{C}\|_F}{\|\mathbf{M}\|_F},
 \qquad
 \gamma=\operatorname{cos}_F(\mathbf{C},\mathbf{M}).
\label{eq:r_gamma_app}
\end{equation}
By Assumption~1.2, \(\gamma>0\). From the definitions of \(r\) and
\(\gamma\),
\begin{equation}
\langle \mathbf{C},\mathbf{M}\rangle_F
=
\gamma\|\mathbf{C}\|_F\|\mathbf{M}\|_F
=
r\gamma\|\mathbf{M}\|_F^2.
\label{eq:inner_CM_app}
\end{equation}
Moreover,
\begin{equation}
\|\mathbf{M}+\mathbf{C}\|_F^2
=
\|\mathbf{M}\|_F^2+\|\mathbf{C}\|_F^2
+2\langle \mathbf{C},\mathbf{M}\rangle_F
=
\|\mathbf{M}\|_F^2(1+r^2+2r\gamma).
\label{eq:norm_MC_app}
\end{equation}
Using Eqs.~\eqref{eq:inner_CM_app} and~\eqref{eq:norm_MC_app}, the Frobenius
cosine-similarity becomes
\begin{equation}
\begin{aligned}
\operatorname{cos}_F(\mathbf{M}+\mathbf{C},\mathbf{M})
&=
\frac{
\langle \mathbf{M}+\mathbf{C},\mathbf{M}\rangle_F
}{
\|\mathbf{M}+\mathbf{C}\|_F\|\mathbf{M}\|_F
}  \\
&=
\frac{
\|\mathbf{M}\|_F^2+\langle \mathbf{C},\mathbf{M}\rangle_F
}{
\|\mathbf{M}\|_F^2\sqrt{1+r^2+2r\gamma}
}
=
\frac{1+r\gamma}{\sqrt{1+r^2+2r\gamma}} .
\end{aligned}
\label{eq:cos_MC_M_app}
\end{equation}
For \(r\ge0\) and \(\gamma\in(0,1]\), we have
\[
\frac{1+r\gamma}{\sqrt{1+r^2+2r\gamma}}\ge \gamma,
\]
since both sides are nonnegative and the squared inequality reduces to
\[
(1-\gamma^2)(1+2r\gamma)\ge0.
\]
Therefore,
\begin{equation}
\operatorname{cos}_F(\mathbf{M}+\mathbf{C},\mathbf{M})
\ge
\gamma > \alpha_t.
\label{eq:MC_M_lower_app}
\end{equation}
Finally, by Eq.~\eqref{eq:second_moment_decomp_app},
\[
\mathbb{E}[\mathbf{D}^{\top}\mathbf{D}]
=
\mathbf{M}+\mathbf{C},
\]
and by Eq.~\eqref{eq:M_U_app},
\[
\mathbf{M}
=
\rho^2(\mathbf{U}^{(t)})^{\top}\mathbf{U}^{(t)}.
\]
Since \(\rho^2>0\), this positive scalar does not change Frobenius cosine-similarity. Therefore Eq.~\eqref{eq:MC_M_lower_app} implies
\begin{equation}
\operatorname{cos}_F\!\left(
\mathbb{E}[\mathbf{D}^{\top}\mathbf{D}],
(\mathbf{U}^{(t)})^{\top}\mathbf{U}^{(t)}
\right)
>\alpha_t.
\label{eq:DGram_UGram_app}
\end{equation}

It remains to connect \(\mathbb{E}[\mathbf{D}^{\top}\mathbf{D}]\) to the
activation Gram matrix. Since
\(\mathbf{D}=\mathbf{g}_{\mathbf{y}}\mathbf{x}^{\top}\),
\begin{equation}
\mathbf{D}^{\top}\mathbf{D}
=
\|\mathbf{g}_{\mathbf{y}}\|_2^2\mathbf{x}\mathbf{x}^{\top}.
\end{equation}
Taking expectations and applying Assumption~1.3,
\begin{equation}
\mathbb{E}[\mathbf{D}^{\top}\mathbf{D}]
=
 \mathbb{E}[\|\mathbf{g}_{\mathbf{y}}\|_2^2]\mathbb{E}[\mathbf{x}\mathbf{x}^{\top}],
\end{equation}
where $\mathbb{E}[\|\mathbf{g}_{\mathbf{y}}\|_2^2]$ does not change cosine similarity. Combining this with
Eq.~\eqref{eq:DGram_UGram_app}, we have
\begin{equation}
\operatorname{cos}_F\!\left(
\mathbb{E}[\mathbf{x}\mathbf{x}^{\top}],
(\mathbf{U}^{(t)})^{\top}\mathbf{U}^{(t)}
\right)
>\alpha_t.
\end{equation}
This completes the proof.
\hfill \(\blacksquare\)



\section{Sampling-based BMM for Uncertainty Calibration}
\label{app:posterior-sampling}

\paragraph{Sampling-based BMM.} A MAP point estimate of $\mathbf{U}$ is used in Algorithm~\ref{alg:overall} mainly for the purpose of computational efficiency. Note that the Bayesian linear regression formulation in Sec.~\ref{sec:linear_regression} leads to a Gaussian posterior $\mathcal{N}(\mathbf{U}_\text{MAP}, \mathbf{\Sigma}_\text{row})$ with
\begin{equation}
    \mathbf{U}_{\text{MAP}} = \left( \mathbf{Y}\mathbf{X}^\top + \lambda \mathbf{U}^{(0)} \right) \left( \mathbf{X}\mathbf{X}^\top + \lambda \mathbf{I} \right)^{-1},
    \quad\quad
    \mathbf{\Sigma}_\text{row} = \beta^{-1}\left( \mathbf{X}\mathbf{X}^\top + \lambda \mathbf{I} \right)^{-1},
    \label{eq:sampling-based}
\end{equation}
where $\mathbf{I} \in \mathbb{R}^{d_{\mathrm{in}} \times d_{\mathrm{in}}}$ is the identity matrix, $\mathbf{\Sigma}_\text{row}$ is the covariance matrix for each row of $\mathbf{U}$, and $\beta$ is the precision of Gaussian noise, which can be tuned on a validation set. Once $\boldsymbol{\lambda}^\star$ is determined by Algorithm~\ref{alg:overall}, we can sample multiple merged models for uncertainty calibration~\cite{guo17_calibration}.

\begin{figure}[t]
    \centering
    \begin{subfigure}[t]{0.49\linewidth}
        \centering
        \includegraphics[width=\linewidth]{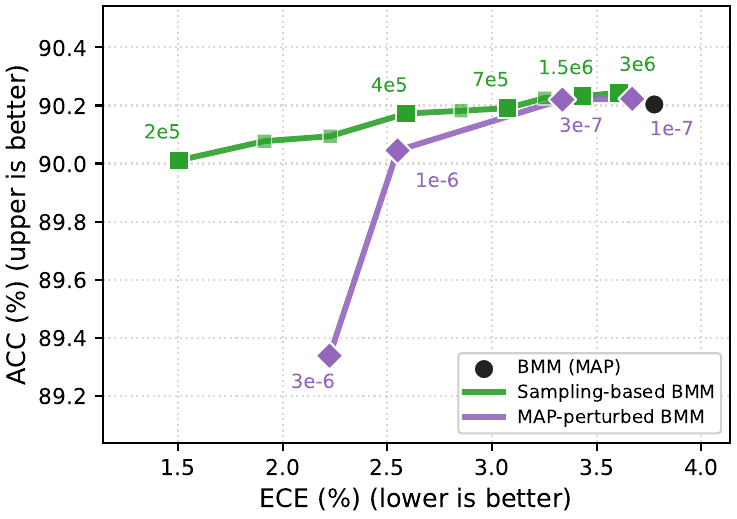}\vspace{-5pt}
        \caption{8-task benchmark}
        \label{fig:covcal_vitb32_8task}
    \end{subfigure}
    \hfill
    \begin{subfigure}[t]{0.49\linewidth}
        \centering
        \includegraphics[width=\linewidth]{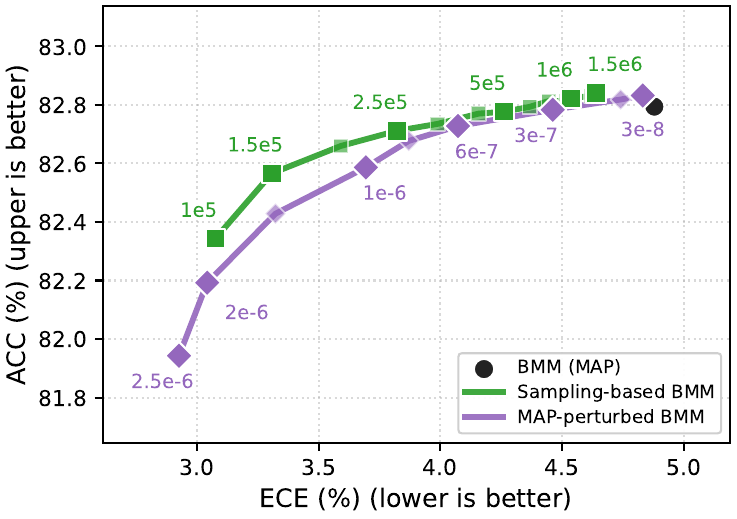}\vspace{-5pt}
        \caption{20-task benchmark}
        \label{fig:covcal_vitb32_20task}
    \end{subfigure}\vspace{-5pt}
    \caption{
    Pareto frontiers of sampling-based BMM vs. MAP-perturbed BMM on ViT-B/32 benchmarks. Each dot is generated by varying either $\beta$ or $\sigma_\text{iid}$, and the black dot is the performance of BMM (MAP), corresponding to $\beta=\infty$ or $\sigma_\text{iid}=0$. 
    }
    \label{fig:covcal_vitb32}
\end{figure}

Specifically, let $\{\theta_\text{merged}^{(r)}\}_{r=1}^S$ be $S$ merged models sampled from the Gaussian posteriors (given $\boldsymbol{\lambda}^\star$ and $\beta$), and \(\mathbf{z}(\mathbf{x};\theta)\) denote the logit vector of model \(\theta\). The prediction of the model ensemble is the average probability across $S$ merged models:
\begin{equation}
    p(\mathbf{y}|\mathbf{x})
    =
    \frac{1}{S}
    \sum_{r=1}^{S}
    \operatorname{softmax}\!\left(\mathbf{z}(\mathbf{x};\theta^{(r)}_{\mathrm{merged}})\right).
    \label{eq:app_covcal_prob_avg}
\end{equation}
As we will see, this model ensemble improves the uncertainty calibration as measured by ECE, a standard metric for this purpose.

\paragraph{Expected Calibration Error.} ECE is a commonly adopted metric to measure the calibration of a model. First, it computes the confidence of the model, $\max_y p(y|\mathbf{x}_i)$, for each $\mathbf{x}_i$ in the dataset. Then it groups the predictions into equally spaced buckets $\{B_1,B_2,\cdots, B_\kappa\}$ based on their confidence scores. For example, if $\kappa$ = 20, then $B_1$ would represent all examples for which the model's confidence scores were between 0 and 0.05. Then ECE is calculated as
\begin{equation}
    \mathrm{ECE}=\sum_{b=1}^{\kappa} \frac{\left|B_{b}\right|}{n}\left|\operatorname{acc}\left(B_{b}\right)-\operatorname{conf}\left(B_{b}\right)\right|,
\end{equation}
where $n$ is the number of examples in the dataset, acc($B_b$) is the average accuracy of the model on all the examples in $B_b$ and conf($B_b$) is the average confidence on all the examples in $B_b$. In our experiments, we set $\kappa$ = 20. For a perfectly calibrated model, the ECE will be 0 for any $\kappa$.

\paragraph{Experimental Setup.}
We evaluate the uncertainty calibration of model ensemble, constructed from sampling-based BMM (controlled by $\beta$), on the ViT-B/32 8-task and 20-task benchmarks in the data-assisted setting, where ISO-CTS serves as the anchor model. Preliminary experiments show that sampling only for the last MLP output matrix leads to the most prominent ECE performance -- similar observations have also been made by~\cite{kristiadi2020being}. As a baseline, we also perturb the MAP solution of the last MLP output matrix with IID Gaussian noise (controlled by $\sigma_\text{iid}$). Both methods generate $S=10$ samples of merged models, and use the average prediction of Eq.~\ref{eq:app_covcal_prob_avg} for final classification. By varying $\beta$ and $\sigma_\text{iid}$, each model ensemble trades-off between classification accuracies and ECE scores, and the Pareto frontiers of both methods are reported in Figure~\ref{fig:covcal_vitb32}.


\paragraph{Results and Analysis.}
As can be seen, the Pareto frontier of sampling-based BMM dominates that of MAP-perturbed BMM, indicating a better trade-off between classification accuracies and ECE scores for the former. In addition, the gap between two Pareto frontiers is more pronounced on the 8-task benchmark, indicating that merging on 20-task benchmark is more challenging and the flexibility for the tradeoff is limited.



\section{Experimental Protocol and Assets}
\label{app:exp-protocol}

\subsection{Vision Benchmarks}
\label{app:vision-benchmarks}

For the \emph{vision} experiments, to ensure a rigorous and fair comparison, we strictly adhere to the benchmarks established by TSV~\citep{gargiulo2025tsv} and ISO-CTS~\citep{marczak2025iso}, adopting their exact datasets and identical training, validation, and test splits. Specifically, our evaluation spans three benchmarks with increasing task diversity. The initial 8-task benchmark comprises Stanford Cars~\citep{krause2013cars}, DTD~\citep{cimpoi2014dtd}, EuroSAT~\citep{helber2019eurosat}, GTSRB~\citep{stallkamp2011gtsrb}, MNIST~\citep{lecun1998mnist}, RESISC45~\citep{cheng2017resisc45}, SUN397~\citep{xiao2016sun397}, and SVHN~\citep{netzer2011svhn}. The 14-task benchmark extends this setting by adding CIFAR-100~\citep{krizhevsky2009cifar}, STL-10~\citep{coates2011stl10}, Flowers102~\citep{nilsback2008flowers}, Oxford-IIIT Pets~\citep{parkhi2012pets}, PCAM~\citep{veeling2018pcam}, and FER2013~\citep{goodfellow2013fer2013}. Finally, the largest benchmark encompasses 20 tasks by further incorporating EMNIST~\citep{cohen2017emnist}, CIFAR-10~\citep{krizhevsky2009cifar}, Food101~\citep{bossard2014food101}, Fashion-MNIST~\citep{xiao2017fashionmnist}, Rendered SST-2~\citep{socher2013sst}, and KMNIST~\citep{clanuwat2018kmnist}. Table~\ref{tab:merged_performance_comparison} in the main paper reports the performance of the merged models averaged across the 8, 14, and 20 tasks, while the ``mean $\pm$ std'' results and the per-task breakdown radar charts are provided in Appendices~\ref{app:detailed-aggregate} and~\ref{app:vit-pertask}, respectively.

\subsection{Language Benchmarks}
\label{app:language-benchmarks}

\begin{table}[H]
\centering
\caption{Summary of training and evaluation datasets for natural language processing experiments. Official standard splits are adopted for all benchmarks, with the exception of the GSM8k validation set (500 samples from the training set) and datasets marked with * (custom 1/3-2/3 splits).}
\label{tab:unified_datasets_nlp}

\rowcolors{2}{white}{gray!10} 

\begin{tabularx}{\linewidth}{
    >{\raggedright\arraybackslash}p{2.3cm} 
    >{\raggedright\arraybackslash}X      
    >{\raggedright\arraybackslash}X      
    >{\raggedright\arraybackslash}X      
    >{\raggedright\arraybackslash}p{2.2cm} 
}
\toprule
\textbf{Category} & \textbf{Training} & \textbf{Validation} & \textbf{Test} & \textbf{Metric} \\
\midrule
Instruction & TULU-3 Persona~\cite{lambert2024tulu} & IFEval*~\cite{zhou2023instruction} & IFEval*~\cite{zhou2023instruction} & Prompt Acc. \\

Mathematics & DART~\cite{tong2024dart}, NuminaMath~\cite{li2024numinamath} & GSM8k~\cite{cobbe2021training} & GSM8k~\cite{cobbe2021training} & EM (8-shot) \\

Multilingual & Aya~\cite{singh2024aya} & M\_MMLU~\cite{lai2023okapi} (fr, es, de, ru) & M\_MMLU~\cite{lai2023okapi} (fr, es, de, ru) & Accuracy \\

Coding & Magicoder~\cite{wei2023magicoder} & MBPP~\cite{austin2021program} & HumanEval+~\cite{liu2023evalplus}, MBPP+~\cite{liu2023evalplus} & Pass@1 \\

Safety & WildGuard~\cite{han2024wildguard}, WildJailbreak~\cite{jiang2024wildteaming} & HarmBench*~\cite{mazeika2024harmbench}, XSTest*~\cite{rottger2023xstest} & HarmBench*~\cite{mazeika2024harmbench}, XSTest*~\cite{rottger2023xstest} & RTA / Acc. \\
\bottomrule
\end{tabularx}
\end{table}

For the NLP experiments, Table~\ref{tab:unified_datasets_nlp} summarizes the training, validation, and test splits, together with the corresponding evaluation metrics. Our evaluation covers five task categories. For categories involving multiple test sets, we first average the individual test scores within each category to obtain a category-level score. Throughout the main text, we report the macro-average over the five category-level scores as the overall performance, while category-level language breakdowns are provided in Appendix~\ref{app:vit-pertask}.



\subsection{Licenses and Asset Usage}
\label{app:licenses}

We use existing public assets, including pretrained backbones, task-specific
checkpoints, benchmark datasets, evaluation suites, and baseline
implementations. These assets are described above, and
their original creators are credited through the corresponding references.

For vision experiments, we follow public model-merging benchmark settings based
on ViT backbones and task-specific expert checkpoints. For language experiments,
we use Llama-3.2-3B and Llama-3.1-8B backbones under the corresponding Meta
Llama Community License Agreements and acceptable-use policy. We do not
redistribute third-party checkpoints or model weights except where permitted by
their original licenses.

All datasets and evaluation suites are used only for research, calibration,
validation, and evaluation. Their source URLs, versions when applicable, and
license or source-term information are provided in the supplementary asset
manifest. Safety benchmarks may contain harmful or adversarial prompts and are
used only for safety research and evaluation.

Baseline methods and third-party libraries are used according to their original
papers, public repositories, and licenses. We retain required copyright notices,
citations, and license files where applicable. We do not introduce new scraped
datasets or human-subject data. All copyrights remain with the original asset
owners, and all external assets are used in compliance with their respective
licenses and terms of use.

\section{Computational Costs and Runtime}
\label{app:runtime}

\subsection{Complexity Analysis}
\label{app:complexity}

We briefly analyze the costs of the closed-form estimators (Eqs.~\ref{eq:closed_form},~\ref{eq:data_free_closed_form}) and the BO search~\eqref{eq:bilevel_outer}. Let $T$ be the number of tasks, \(M\) be the number of merged 2D modules, and \(\mathbf{U}_m\in\mathbb{R}^{d_{\mathrm{out},m}\times d_{\mathrm{in},m}}\). In the data-assisted setting, with $N$ calibration samples per task, \(\mathbf{X}_m\in\mathbb{R}^{d_{\mathrm{in},m}\times N_{\mathrm{act}}}\) and \(\mathbf{Y}_m\in\mathbb{R}^{d_{\mathrm{out},m}\times N_{\mathrm{act}}}\), where \(N_{\mathrm{act}}=NT\). We omit the one-time forward cost for collecting activations.

\paragraph{Closed-form estimators.}
The data-assisted estimator in Eq.~\eqref{eq:closed_form} first caches \(\mathbf{X}_m\mathbf{X}_m^{\top}\) and \(\mathbf{Y}_m\mathbf{X}_m^{\top}\), which costs
$
\mathcal{O}\!\left(
N_{\mathrm{act}}d_{\mathrm{in},m}^{2}
+
N_{\mathrm{act}}d_{\mathrm{out},m}d_{\mathrm{in},m}
\right).
$
Then, given a module-wise \(\lambda_m\), solving
\begin{equation*}
\left(
\mathbf{X}_m\mathbf{X}_m^{\top}+\lambda_m\mathbf{I}
\right)
\mathbf{U}_m^{\top}
=
\left(
\mathbf{Y}_m\mathbf{X}_m^{\top}+\lambda_m\mathbf{U}_m^{(0)}
\right)^{\top},
\end{equation*}
costs
$
\mathcal{O}\!\left(
 d_{\mathrm{in},m}^{3}
 +
 d_{\mathrm{out},m}d_{\mathrm{in},m}^{2}
\right)
$
by using a dense Cholesky solver~\cite{golub2013matrix}. Therefore, assuming a square 2D module for simplicity, the total cost is \(\mathcal{O}(d^3)\). Similarly, the data-free estimator in Eq.~\eqref{eq:data_free_closed_form} replaces activation statistics with the Gram-matrix of task vectors, whose construction costs \(\mathcal{O}(T d_{\mathrm{out},m}d_{\mathrm{in},m}^{2})\) per module, followed by the same Cholesky solver, and leads to the same total cost of \(\mathcal{O}(d^3)\). Typically, $d=1,024$ in our benchmarks. Thus, the cost of the closed-form estimators is insignificant on modern high-performance GPUs.

\paragraph{BO search overhead.}
Algorithm~\ref{alg:overall} performs \(K\) BO trials. The exact $\mathcal{GP}$ update costs \(\mathcal{O}(K^3)\) due to the Cholesky factorization of the $\mathcal{GP}$ kernel matrix~\cite{rasmussen2006gaussian,frazier2018bayesian}. In practice, this overhead is very small compared with repeated validation-set evaluations. Hence, the practical runtime of BO is dominated by evaluating candidate merged models on the validation set.

\subsection{Runtime Breakdown}
\label{app:runtime-breakdown}

\input{tables/runtime}

Table~\ref{tab:runtime_vit_llama} reports the wall-clock runtime of the BMM pipeline. 
The vision experiments with $K\!=\!200$ BO trials are measured on a single NVIDIA RTX A6000 48GB, with validation tensors cached locally to reduce I/O overhead. 
The language experiments with $K\!=\!100$ BO trials report per-worker timings under an 8-GPU parallel setup. 
Across both vision and language benchmarks, the algorithmic overhead of BMM remains small compared with validation-set forward passes, indicating that the main runtime cost comes from model evaluation rather than from the BMM update itself.

\subsection{Runtime--Performance Trade-offs}
\label{app:runtime-tradeoff}

\input{tables/grid_bmm_runtime_score}

As shown in Figure~\ref{fig:bmm-ablation}        (right) and Table~\ref{tab:grid_bmm_runtime_score}, BMM provides favorable runtime--performance trade-offs in the data-free setting, even under constrained BO budgets. 
On the 20-task ViT-B/32 benchmark, BMM with only $K=20$ trials outperforms all grid-search anchors while requiring substantially less wall-clock time. Increasing the budget to $K=60$ further improves the score to $81.52\%$. 
This efficiency stems from BMM's closed-form updates and BO-based search, which avoid the main bottlenecks of baseline methods: WUDI-Merging relies on costly iterative SGD optimization, whereas ISO-CTS requires exhaustive grid search over $225$ configurations. 
For Llama-3.2-3B, BMM also substantially reduces search cost. With $K=20$, it reaches a score close to the strongest TSV baseline in less than half of TSV's wall-clock time and far less time than WUDI-Merging or ISO-CTS. 
These results suggest that BMM's closed-form solution combined with BO-based search provides an efficient model merging method across both vision and language benchmarks.

\section{Additional Experimental Results}
\label{app:additional-results}

\subsection{Detailed results on vision tasks with standard deviations}
\label{app:detailed-aggregate}

\input{tables/table1_std}

Table~\ref{tab:std_performance_comparison} provides the comprehensive results corresponding to Table~\ref{tab:merged_performance_comparison}, including standard deviations across five random seeds (0--4). The results demonstrate minimal variance across all settings, with standard deviations predominantly ranging from $\pm 0.01$ to $\pm 0.06$. The low variance confirms that the performance improvements achieved by BMM are stable and robust, rather than artifacts of specific seed initializations.

\subsection{Radar Charts: Per-Task Breakdowns}
\label{app:vit-pertask}

Tables~\ref{tab:merged_performance_comparison} and~\ref{tab:nlp_main} report the performance of merged models averaged across multiple tasks. However, such summaries may hide task-specific trade-offs. To address this issue, Figures~\ref{fig:radar_vitb32_part1}--\ref{fig:radar_vitl14_part2} provide vision per-task radar charts for all evaluated ViT settings. Similarly, Figures~\ref{fig:radar_llama_part1}--\ref{fig:radar_llama_part2} provide language per-task radar charts for all evaluated Llama settings.

\paragraph{Key Observations.}
In most evaluated settings, BMM exhibits a Pareto dominance over the corresponding anchor baselines, improving every task without reducing performance on any individual task. This indicates that the average gains reported in Tables~\ref{tab:merged_performance_comparison} and~\ref{tab:nlp_main} generally reflect a broad multi-task expansion rather than a redistribution of performance across tasks. The strong overlap between the data-assisted and data-free curves further suggests that our data-free surrogate captures much of the task geometry revealed by the calibration-based estimate, enabling competitive data-free model merging without any auxiliary calibration data.

\input{sections/radar_appendix}

\section{A Hybrid Estimate of Gram Matrix in Few-Shot Regimes}\vspace{-5pt}
Theorem~\ref{thm:alignment} indicates that the Gram matrix 
$\mathbb{E}[\mathbf{x}\mathbf{x}^\top]$ can be estimated empirically from 
few-shot calibration samples (data-assisted) or approximated by the expert-weight 
surrogate $\mathbf{U}^\top \mathbf{U}$ (data-free). Therefore, a hybrid estimate of Gram matrix may be worthy of investigation. We consider a \textit{mix} 
variant that interpolates between the few-shot empirical estimate and the 
data-free surrogate:
\begin{equation}
\mathbf{G}^{\mathrm{mix}}_m(\epsilon)
=
\epsilon\,\mathbf{G}^{\mathrm{few}}_m
+
(1-\epsilon)\,\mathbf{G}^{\mathrm{df}}_m,
\quad \epsilon\in[0,1],
\label{eq:mixed_gram}
\end{equation}
where $\mathbf{G}^{\mathrm{few}}_m=\mathbf{X}_m\mathbf{X}_m^\top$ is estimated
from few-shot calibration samples, $\mathbf{G}^{\mathrm{df}}_m=\sum_{t=1}^{T}(\mathbf{U}^{(t)}_m)^\top\mathbf{U}^{(t)}_m$
is the data-free estimate, and the mixing weight $\epsilon$ is optimized by BO.

Table~\ref{tab:gram_ablation} shows this \textit{mix} variant consistently 
improves upon the pure data-free variant and often enhances few-shot results, 
particularly for TSV and ISO-CTS. This is likely because the 1-shot or 5-shot empirical estimates 
are often noisy and rank-deficient, and the data-free surrogate provides a reliable 
estimate based on the weights of expert models. As can be seen, \textit{1-shot mix} 
boosts 8-task TSV accuracy from 87.8\% to 88.9\%, and \textit{5-shot mix} improves 
20-task ISO-CTS from 82.2\% to 83.7\%. Although the \textit{mix} variant is very close to  pure few-shot estimates for WUDI in certain cases, its overall robustness 
confirms that blending limited empirical statistics with data-free surrogate is highly effective in few-shot regimes.

\input{tables/fewshot}

%% file: tables/runtime.tex
\begin{table}[t]
\centering
\caption{\textbf{Runtime breakdown of BMM on vision and language tasks.}
The ViT timings are measured on a single NVIDIA A6000 GPU, while the Llama timings are reported per worker under an 8-way parallel BO execution.
\emph{Gram} denotes the one-time Gram-matrix construction cost in the data-assisted setting.
\emph{Closed-form} and \emph{GP} denote the cumulative costs of closed-form solution and GP surrogate updates, respectively.
\emph{Search} includes Gram construction, closed-form solution, GP updates, model assembly, checkpoint loading/book-keeping, and rounding.
\emph{Val. Cost} reports the time spent on validation-set evaluations.}
\label{tab:runtime_vit_llama}

\small
\setlength{\tabcolsep}{4pt} 

\begin{tabular}{@{}lllrrrrrr@{}}
\toprule
Model & Tasks & Setting & Gram & Closed-form & GP & Search & Val. Cost & Total \\
\midrule
ViT-B/32 & 8  & data-assisted & 0.4 min & 2.5 min  & 0.4 min & 3.3 min  & 17.3 min  & 20.6 min \\
ViT-B/32 & 8  & data-free     & --      & 2.2 min  & 0.4 min & 2.9 min  & 17.3 min  & 20.2 min \\
ViT-B/32 & 14 & data-assisted & 0.6 min & 4.5 min  & 0.4 min & 6.2 min  & 22.3 min  & 28.5 min \\
ViT-B/32 & 14 & data-free     & --      & 5.2 min  & 0.4 min & 5.6 min  & 22.3 min  & 27.9 min \\
ViT-B/32 & 20 & data-assisted & 0.8 min & 4.6 min  & 0.4 min & 8.0 min  & 41.6 min  & 49.6 min \\
ViT-B/32 & 20 & data-free     & --      & 6.9 min  & 0.4 min & 7.3 min  & 41.6 min  & 48.8 min \\
\midrule
ViT-L/14 & 8  & data-assisted & 2.8 min & 8.2 min  & 0.4 min & 17.1 min & 168.4 min & 185.5 min \\
ViT-L/14 & 8  & data-free     & --      & 13.9 min & 0.4 min & 14.3 min & 168.4 min & 182.7 min \\
ViT-L/14 & 14 & data-assisted & 5.8 min & 12.1 min & 0.4 min & 18.6 min & 292.1 min & 310.7 min \\
ViT-L/14 & 14 & data-free     & --      & 11.3 min & 0.4 min & 12.8 min & 292.1 min & 304.9 min \\
ViT-L/14 & 20 & data-assisted & 9.8 min & 17.0 min & 0.4 min & 31.7 min & 539.0 min & 570.7 min \\
ViT-L/14 & 20 & data-free     & --      & 21.5 min & 0.4 min & 21.9 min & 539.0 min & 560.9 min \\
\midrule
Llama-3.2-3B & 5 & data-assisted & 1.0 min & 12.9 min & 0.1 min & 14.5 min & 126.9 min & 141.4 min \\
Llama-3.2-3B & 5 & data-free     & --      & 13.4 min & 0.1 min & 13.5 min & 126.9 min & 140.3 min \\
Llama-3.1-8B & 5 & data-assisted & 2.8 min & 38.4 min & 0.1 min & 43.4 min & 190.7 min & 234.1 min \\
Llama-3.1-8B & 5 & data-free     & --      & 39.9 min & 0.1 min & 39.9 min & 190.7 min & 230.6 min \\
\bottomrule
\end{tabular}
\end{table}

%% file: tables/grid_bmm_runtime_score.tex
\begin{table}[t]
\centering
\caption{Runtime and performance comparison of BMM and state-of-the-art data-free merging methods. For ViT-B/32, BMM uses ISO-CTS as anchor; for Llama-3.2-3B, BMM uses TSV as achor. 
Baselines use their full grid-search budgets, whereas BMM is evaluated with BO budgets of $K=20$ and $K=60$. 
The ViT runtimes are measured on a single NVIDIA A6000 GPU, and the Llama runtimes report 8-GPU parallel wall-clock time. 
Scores denote test accuracies for ViT-B/32 and aggregate benchmark scores for Llama-3.2-3B.}
\label{tab:grid_bmm_runtime_score}
\begin{tabular}{lllrrrr}
\toprule
Model & Tasks & Method & Search Dim. & \#Trials & Runtime & Score \\
\midrule
ViT-B/32 & 20 & TSV & 1 & 30 & 8.4 min & 76.9 \\
ViT-B/32 & 20 & WUDI-Merging & 2 & 24 & 116.1 min & 76.1 \\
ViT-B/32 & 20 & ISO-CTS & 3 & 225 & 91.2 min & 77.6 \\
ViT-B/32 & 20 & BMM ($K=20$) & 15 & 20 & 5.60 min & 80.0 \\
ViT-B/32 & 20 & BMM ($K=60$) & 15 & 60 & 15.55 min & 81.5 \\
\midrule
Llama-3.2-3B & 5 & TSV & 1 & 30 & 73.4 min & 0.471 \\
Llama-3.2-3B & 5 & WUDI-Merging & 2 & 24 & 418.8 min & 0.461 \\
Llama-3.2-3B & 5 & ISO-CTS & 3 & 225 & 318.0 min & 0.441 \\
Llama-3.2-3B & 5 & BMM ($K=20$) & 5 & 20 & 28.6 min & 0.465 \\
Llama-3.2-3B & 5 & BMM ($K=60$) & 5 & 60 & 84.8 min & 0.473 \\
\bottomrule
\end{tabular}
\end{table}

%% file: tables/table1_std.tex
\begin{table*}[t]
\caption{Model merging on vision tasks with ViT architectures. Values represent the mean accuracy over five independent runs (seeds 0--4). Black subscripts indicate the standard deviation, which are omitted in Table~\ref{tab:merged_performance_comparison} due to lack of space. (Indiv.: Individual)}

\label{tab:std_performance_comparison}
\centering

\newcommand{\std}[1]{\textsubscript{\normalcolor{\(\pm#1\)}}}

\setlength{\tabcolsep}{3.5pt}

\resizebox{\textwidth}{!}{
\begin{tabular}{l c l c *{7}{r@{}l}}
\toprule
\textbf{Model} & \textbf{Tasks} & \textbf{Setting} & \textbf{Indiv.} &
\multicolumn{2}{c}{\textbf{Pretrained}} & \multicolumn{2}{c}{\textbf{TA}} &
\multicolumn{2}{c}{\textbf{TIES}} & \multicolumn{2}{c}{\textbf{RegMean}} &
\multicolumn{2}{c}{\textbf{TSV}} & \multicolumn{2}{c}{\textbf{WUDI}} &
\multicolumn{2}{c}{\textbf{ISO-CTS}} \\
\midrule

\multirow{9}{*}{ViT-B/32}
 & \multirow{3}{*}{8}
 & \cellcolor{gray!10}anchor   & \cellcolor{gray!10}92.8 
 & \cellcolor{gray!10} & \cellcolor{gray!10}\llap{47.7}
 & \cellcolor{gray!10} & \cellcolor{gray!10}\llap{70.4}
 & \cellcolor{gray!10} & \cellcolor{gray!10}\llap{75.7}
 & \cellcolor{gray!10} & \cellcolor{gray!10}\llap{82.3}
 & \cellcolor{gray!10} & \cellcolor{gray!10}\llap{85.9}
 & \cellcolor{gray!10} & \cellcolor{gray!10}\llap{\textbf{87.0}}
 & \cellcolor{gray!10} & \cellcolor{gray!10}\llap{86.4} \\
 & & data-assisted & - 
 & 84.8 & \std{0.02} & 85.0 & \std{0.05} & 85.7 & \std{0.03} & 85.2 & \std{0.05} & 88.9 & \std{0.05} & 89.4 & \std{0.02} & \textbf{90.2} & \std{0.03} \\
 & & data-free     & - 
 & 87.9 & \std{0.06} & 87.0 & \std{0.05} & 87.0 & \std{0.06} & 87.9 & \std{0.09} & 87.8 & \std{0.04} & 87.6 & \std{0.04} & \textbf{89.0} & \std{0.03} \\
\cmidrule{2-18}
 & \multirow{3}{*}{14}
 & \cellcolor{gray!10}anchor   & \cellcolor{gray!10}90.9 
 & \cellcolor{gray!10} & \cellcolor{gray!10}\llap{56.9}
 & \cellcolor{gray!10} & \cellcolor{gray!10}\llap{65.2}
 & \cellcolor{gray!10} & \cellcolor{gray!10}\llap{68.2}
 & \cellcolor{gray!10} & \cellcolor{gray!10}\llap{76.6}
 & \cellcolor{gray!10} & \cellcolor{gray!10}\llap{79.9}
 & \cellcolor{gray!10} & \cellcolor{gray!10}\llap{80.5}
 & \cellcolor{gray!10} & \cellcolor{gray!10}\llap{\textbf{81.5}} \\
 & & data-assisted & - 
 & 78.9 & \std{0.03} & 79.2 & \std{0.04} & 79.7 & \std{0.04} & 79.1 & \std{0.02} & 84.3 & \std{0.04} & 84.6 & \std{0.04} & \textbf{85.5} & \std{0.06} \\
 & & data-free     & - 
 & 82.7 & \std{0.06} & 82.4 & \std{0.03} & 82.3 & \std{0.10} & 82.9 & \std{0.09} & 82.8 & \std{0.03} & 81.7 & \std{0.06} & \textbf{84.3} & \std{0.05} \\
\cmidrule{2-18}
 & \multirow{3}{*}{20}
 & \cellcolor{gray!10}anchor   & \cellcolor{gray!10}91.3 
 & \cellcolor{gray!10} & \cellcolor{gray!10}\llap{55.7}
 & \cellcolor{gray!10} & \cellcolor{gray!10}\llap{60.4}
 & \cellcolor{gray!10} & \cellcolor{gray!10}\llap{64.0}
 & \cellcolor{gray!10} & \cellcolor{gray!10}\llap{72.2}
 & \cellcolor{gray!10} & \cellcolor{gray!10}\llap{76.9}
 & \cellcolor{gray!10} & \cellcolor{gray!10}\llap{76.1}
 & \cellcolor{gray!10} & \cellcolor{gray!10}\llap{\textbf{77.6}} \\
 & & data-assisted & - 
 & 75.6 & \std{0.05} & 75.5 & \std{0.04} & 76.3 & \std{0.03} & 75.9 & \std{0.06} & 82.0 & \std{0.04} & 82.2 & \std{0.06} & \textbf{82.8} & \std{0.05} \\
 & & data-free     & - 
 & 80.0 & \std{0.06} & 78.4 & \std{0.06} & 79.2 & \std{0.05} & 79.7 & \std{0.03} & 79.7 & \std{0.05} & 78.0 & \std{0.02} & \textbf{81.5} & \std{0.04} \\

\midrule
\multirow{9}{*}{ViT-L/14}
 & \multirow{3}{*}{8}
 & \cellcolor{gray!10}anchor   & \cellcolor{gray!10}95.8 
 & \cellcolor{gray!10} & \cellcolor{gray!10}\llap{65.1}
 & \cellcolor{gray!10} & \cellcolor{gray!10}\llap{84.8}
 & \cellcolor{gray!10} & \cellcolor{gray!10}\llap{87.0}
 & \cellcolor{gray!10} & \cellcolor{gray!10}\llap{90.0}
 & \cellcolor{gray!10} & \cellcolor{gray!10}\llap{93.0}
 & \cellcolor{gray!10} & \cellcolor{gray!10}\llap{94.0}
 & \cellcolor{gray!10} & \cellcolor{gray!10}\llap{\textbf{94.8}} \\
 & & data-assisted & - 
 & 92.3 & \std{0.01} & 92.8 & \std{0.05} & 93.1 & \std{0.03} & 92.7 & \std{0.03} & 94.4 & \std{0.03} & 94.7 & \std{0.02} & \textbf{95.1} & \std{0.03} \\
 & & data-free     & - 
 & 94.4 & \std{0.02} & 94.2 & \std{0.01} & 94.2 & \std{0.03} & 94.0 & \std{0.02} & 94.4 & \std{0.01} & 94.3 & \std{0.04} & \textbf{95.0} & \std{0.02} \\
\cmidrule{2-18}
 & \multirow{3}{*}{14}
 & \cellcolor{gray!10}anchor   & \cellcolor{gray!10}94.3 
 & \cellcolor{gray!10} & \cellcolor{gray!10}\llap{68.5}
 & \cellcolor{gray!10} & \cellcolor{gray!10}\llap{79.4}
 & \cellcolor{gray!10} & \cellcolor{gray!10}\llap{80.2}
 & \cellcolor{gray!10} & \cellcolor{gray!10}\llap{85.5}
 & \cellcolor{gray!10} & \cellcolor{gray!10}\llap{89.1}
 & \cellcolor{gray!10} & \cellcolor{gray!10}\llap{90.5}
 & \cellcolor{gray!10} & \cellcolor{gray!10}\llap{\textbf{91.0}} \\
 & & data-assisted & - 
 & 88.0 & \std{0.01} & 88.3 & \std{0.03} & 88.3 & \std{0.03} & 88.2 & \std{0.02} & 91.6 & \std{0.01} & 91.7 & \std{0.03} & \textbf{92.2} & \std{0.07} \\
 & & data-free     & - 
 & 90.9 & \std{0.08} & 90.9 & \std{0.04} & 90.8 & \std{0.02} & 90.6 & \std{0.05} & 91.3 & \std{0.05} & 91.0 & \std{0.06} & \textbf{92.1} & \std{0.06} \\
\cmidrule{2-18}
 & \multirow{3}{*}{20}
 & \cellcolor{gray!10}anchor   & \cellcolor{gray!10}94.7 
 & \cellcolor{gray!10} & \cellcolor{gray!10}\llap{65.4}
 & \cellcolor{gray!10} & \cellcolor{gray!10}\llap{74.0}
 & \cellcolor{gray!10} & \cellcolor{gray!10}\llap{76.7}
 & \cellcolor{gray!10} & \cellcolor{gray!10}\llap{82.8}
 & \cellcolor{gray!10} & \cellcolor{gray!10}\llap{87.7}
 & \cellcolor{gray!10} & \cellcolor{gray!10}\llap{88.4}
 & \cellcolor{gray!10} & \cellcolor{gray!10}\llap{\textbf{90.1}} \\
 & & data-assisted & - 
 & 86.0 & \std{0.02} & 86.2 & \std{0.03} & 86.4 & \std{0.01} & 86.2 & \std{0.01} & 90.9 & \std{0.03} & 90.8 & \std{0.02} & \textbf{91.5} & \std{0.01} \\
 & & data-free     & - 
 & 89.8 & \std{0.02} & 89.6 & \std{0.01} & 89.5 & \std{0.01} & 89.4 & \std{0.04} & 90.3 & \std{0.02} & 89.7 & \std{0.02} & \textbf{91.1} & \std{0.11} \\

\bottomrule
\end{tabular}
}
\end{table*}

%% file: sections/radar_appendix.tex
\newcommand{\radarlegendline}[2]{\textcolor[HTML]{#1}{\rule{8mm}{0.75pt}}\hspace{0.8mm}{\small #2}}
\newcommand{\radarlegend}{%
    \radarlegendline{D62728}{Anchor}\hspace{8mm}%
    \radarlegendline{1F77B4}{Data-assisted BMM}\hspace{8mm}%
    \radarlegendline{2CA02C}{Data-free BMM}\\%
}
\newcommand{\radarinclude}[1]{\raisebox{-.5\height}{\includegraphics[width=0.3\textwidth, keepaspectratio]{#1}}}
\newcommand{\radarlabel}[2]{\raisebox{-.5\height}{\makebox[0.09\textwidth][c]{\shortstack[c]{\scriptsize\textbf{#1}\\\scriptsize (#2)}}}}
\newcommand{\radarheader}{& \textbf{8 Tasks} & \textbf{14 Tasks} & \textbf{20 Tasks} \\[0.12cm]}

\begin{figure}[H]
    \centering
    \radarlegend
    \vspace{0.12cm}

    \setlength{\tabcolsep}{0pt}
    \begin{tabular}{c c c c}
        \radarheader

        \radarlabel{Pretrained}{0--100} &
        \radarinclude{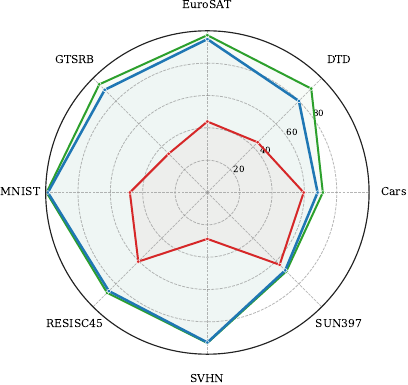} &
        \radarinclude{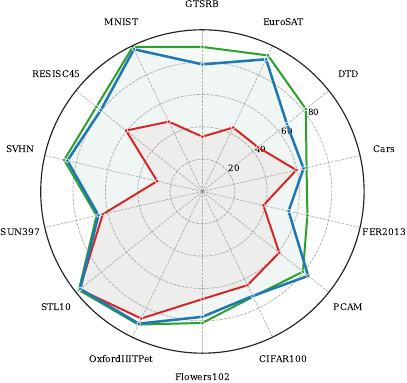} &
        \radarinclude{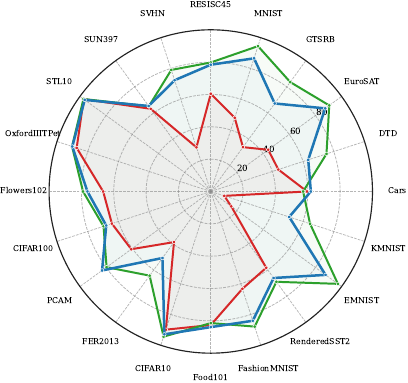} \\[0.08cm]

        \radarlabel{TA}{10--100} &
        \radarinclude{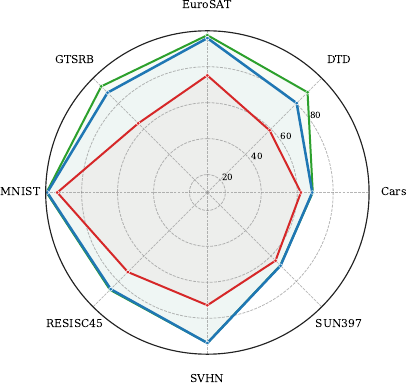} &
        \radarinclude{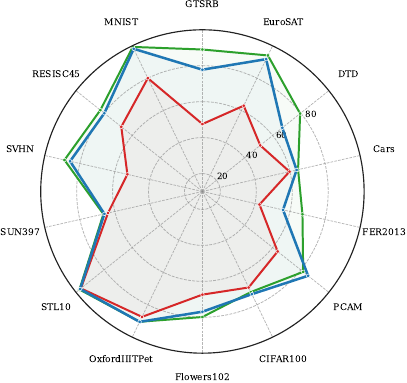} &
        \radarinclude{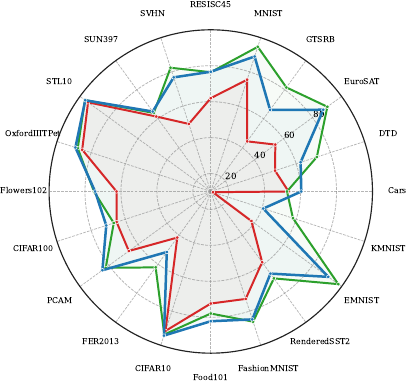} \\[0.08cm]

        \radarlabel{TIES}{10--100} &
        \radarinclude{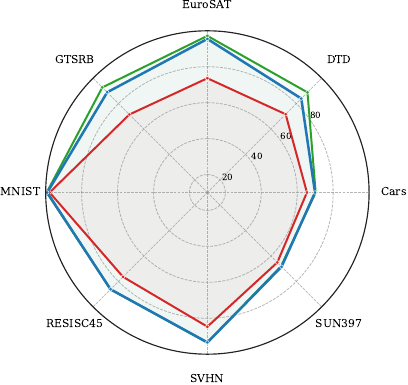} &
        \radarinclude{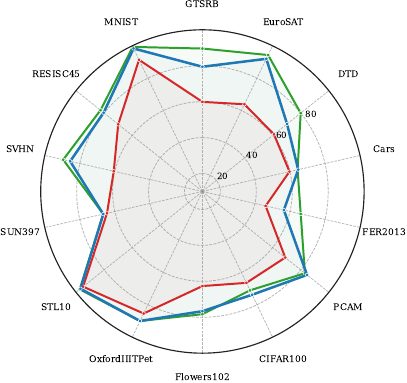} &
        \radarinclude{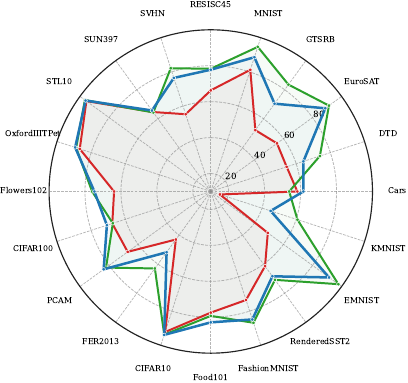} \\[0.08cm]

        \radarlabel{RegMean}{20--100} &
        \radarinclude{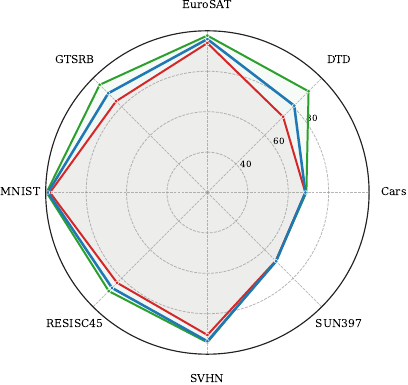} &
        \radarinclude{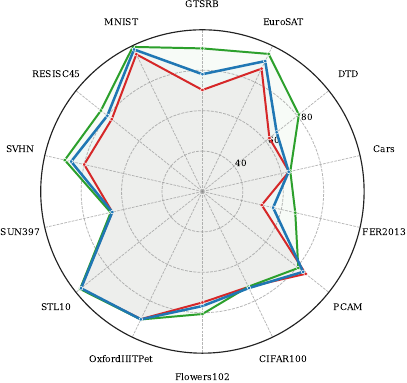} &
        \radarinclude{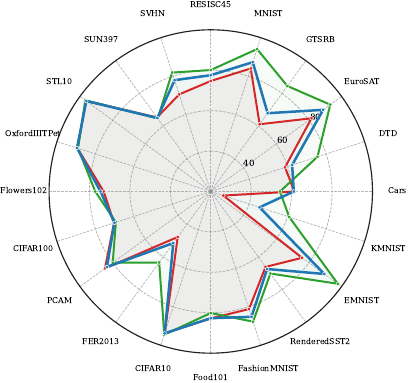} \\[0.08cm]

        \radarlabel{TSV}{50--100} &
        \radarinclude{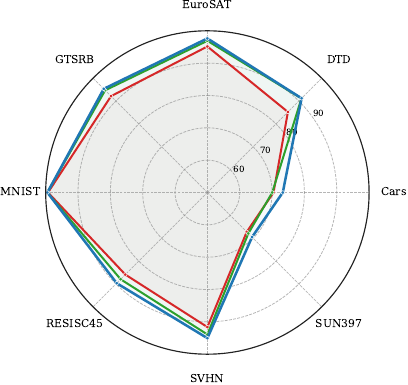} &
        \radarinclude{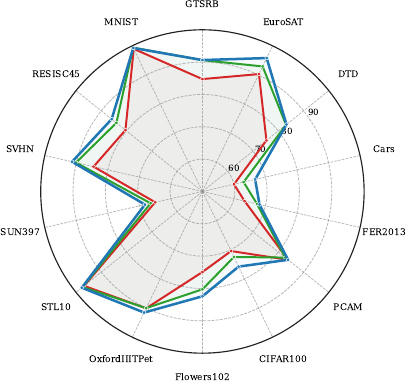} &
        \radarinclude{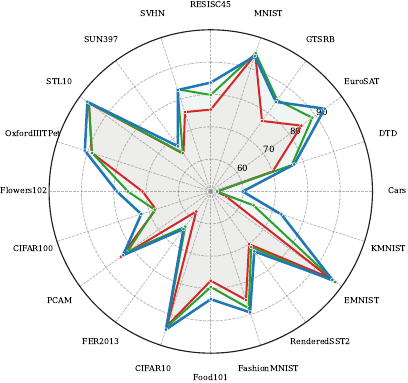}
    \end{tabular}
\vspace{10pt}
\caption{\textbf{Radar charts: ViT-B/32 per-task breakdowns} (corresponding to Table~\ref{tab:merged_performance_comparison}). The radial axis limits (min--max) are annotated on the left.}
\label{fig:radar_vitb32_part1}
\end{figure}

\newpage

\begin{figure}[H]
    \centering
    \radarlegend
    \vspace{0.12cm}

    \setlength{\tabcolsep}{0pt}
    \begin{tabular}{c c c c}
        \radarheader

        \radarlabel{WUDI}{50--100} &
        \radarinclude{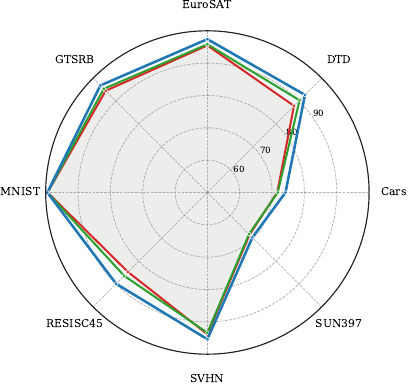} &
        \radarinclude{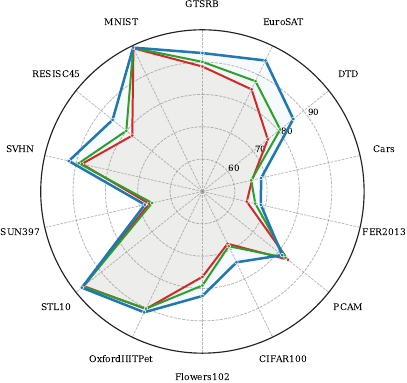} &
        \radarinclude{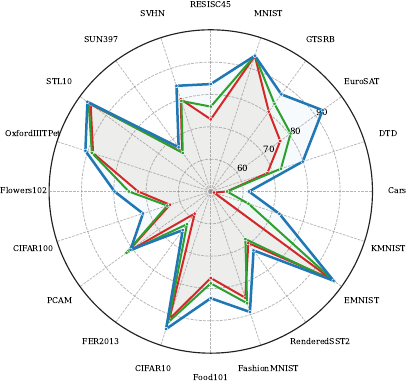} \\[0.08cm]

        \radarlabel{ISO-CTS}{50--100} &
        \radarinclude{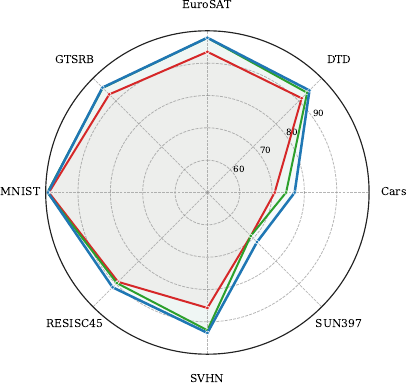} &
        \radarinclude{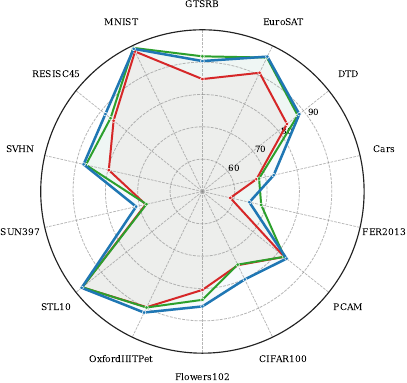} &
        \radarinclude{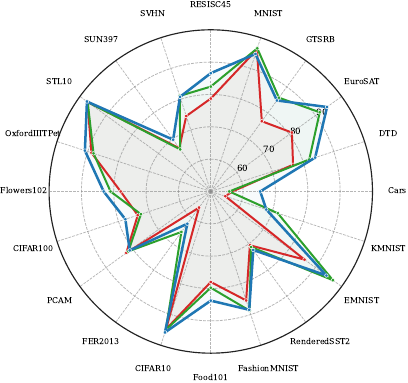}
    \end{tabular}

    \vspace{10pt}
    \caption{\textbf{Radar charts: ViT-B/32 per-task breakdowns} (corresponding to Table~\ref{tab:merged_performance_comparison}). Continued from Figure~\ref{fig:radar_vitb32_part1} on the remaining anchor models: WUDI and ISO-CTS.}
\label{fig:radar_vitb32_part2}
    \vspace{1cm}
    \rule{\textwidth}{0.5pt}
    \vspace{0.45cm}

    \begin{tabular}{c c c c}
        \radarheader

        \radarlabel{Pretrained}{10--100} &
        \radarinclude{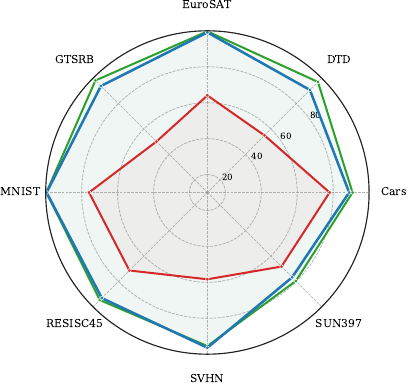} &
        \radarinclude{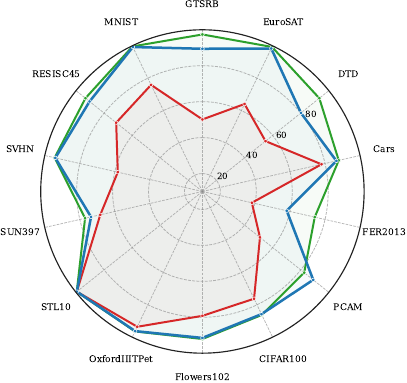} &
        \radarinclude{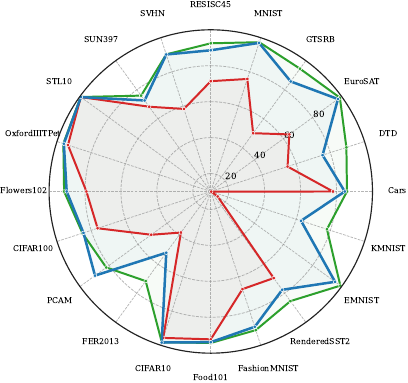} \\[0.08cm]

        \radarlabel{TA}{10--100} &
        \radarinclude{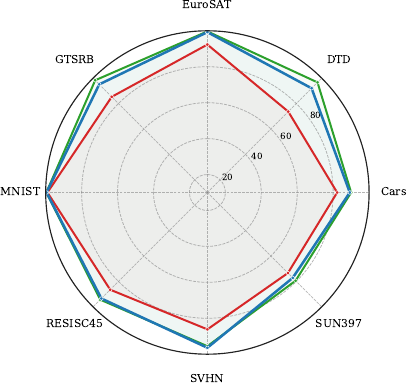} &
        \radarinclude{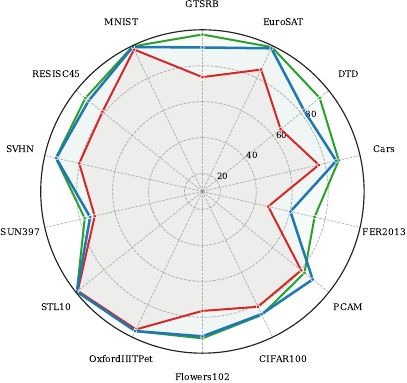} &
        \radarinclude{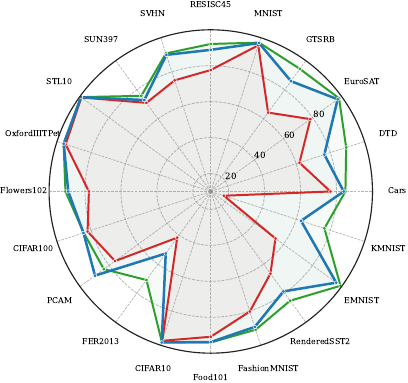}
    \end{tabular}

    \vspace{10pt}
    \caption{\textbf{Radar charts: ViT-L/14 per-task breakdowns} (corresponding to Table~\ref{tab:merged_performance_comparison}). Layout, legend, and row-specific axis scaling follow Figure~\ref{fig:radar_vitb32_part1}.}
\label{fig:radar_vitl14_part1}
\end{figure}

\newpage

\begin{figure}[H]
    \centering
    \radarlegend
    \vspace{0.12cm}

    \setlength{\tabcolsep}{0pt}
    \begin{tabular}{c c c c}
        \radarheader

        \radarlabel{TIES}{30--100} &
        \radarinclude{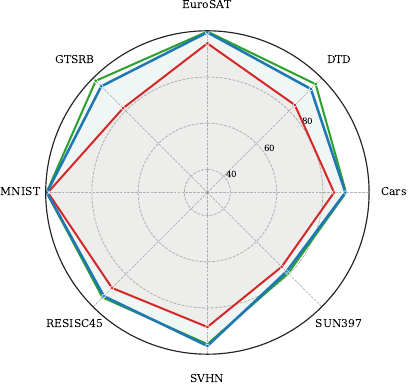} &
        \radarinclude{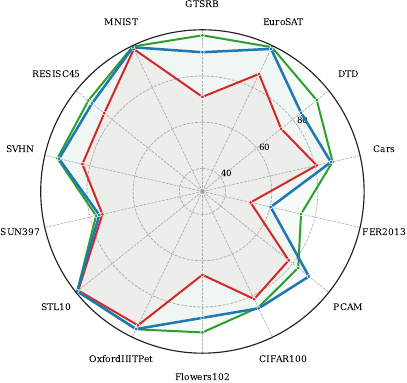} &
        \radarinclude{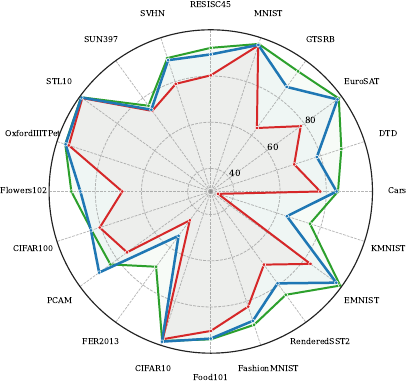} \\[0.08cm]

        \radarlabel{RegMean}{40--100} &
        \radarinclude{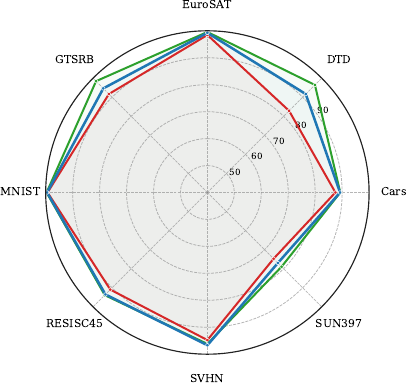} &
        \radarinclude{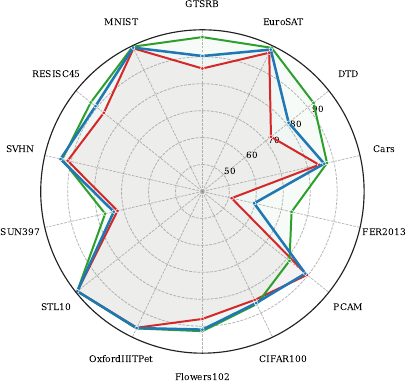} &
        \radarinclude{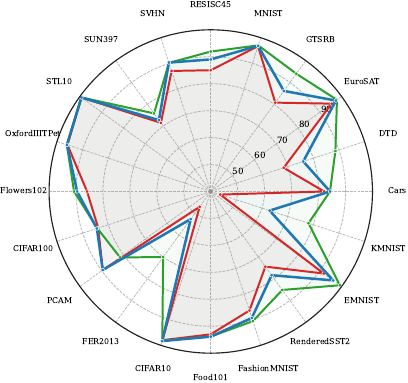} \\[0.08cm]

        \radarlabel{TSV}{60--100} &
        \radarinclude{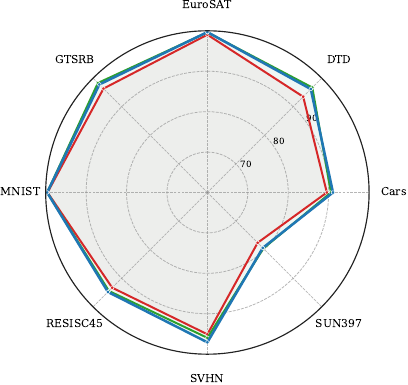} &
        \radarinclude{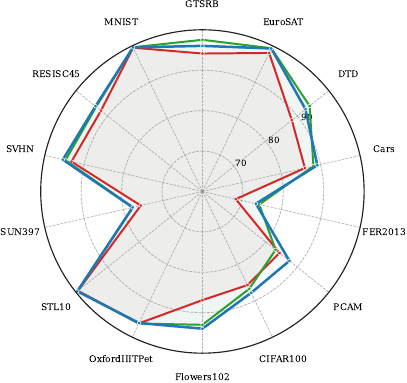} &
        \radarinclude{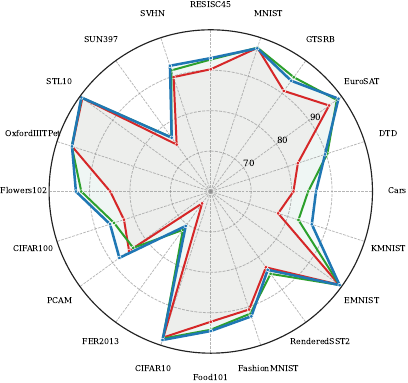} \\[0.08cm]

        \radarlabel{WUDI}{60--100} &
        \radarinclude{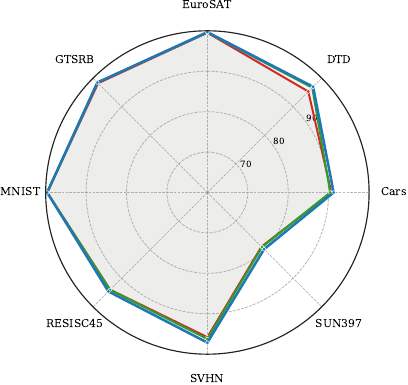} &
        \radarinclude{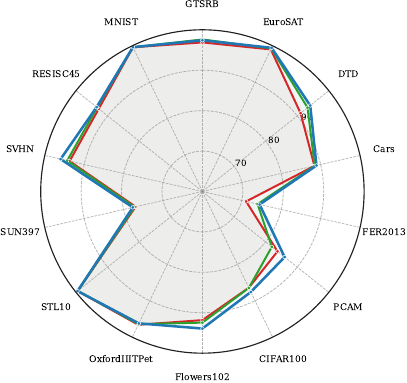} &
        \radarinclude{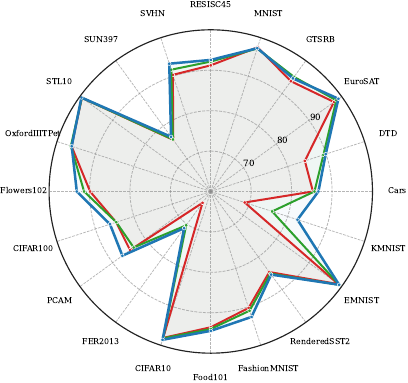} \\[0.08cm]

        \radarlabel{ISO-CTS}{60--100} &
        \radarinclude{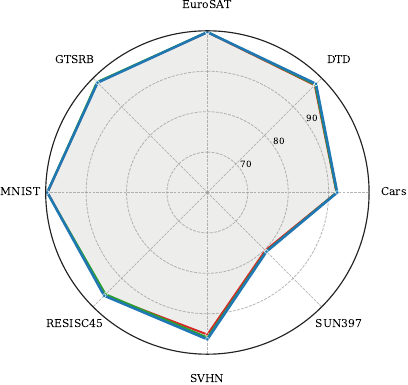} &
        \radarinclude{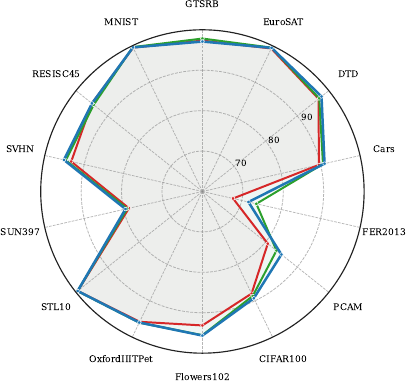} &
        \radarinclude{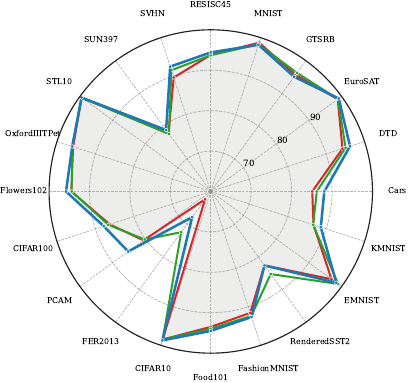}
    \end{tabular}
    
    \vspace{10pt}
    \caption{\textbf{Radar charts: ViT-L/14 per-task breakdowns} (corresponding to Table~\ref{tab:merged_performance_comparison}). Continued from Figure~\ref{fig:radar_vitl14_part1} on the remaining anchor models.}
\label{fig:radar_vitl14_part2}
\end{figure}

\clearpage

\newcommand{\llamaradarinclude}[1]{\raisebox{-.5\height}{\makebox[0.39\textwidth][c]{\includegraphics[width=0.34\textwidth, keepaspectratio]{#1}}}}
\newcommand{\llamaradarlabel}[2]{\raisebox{-.5\height}{\makebox[0.12\textwidth][c]{\shortstack[c]{\scriptsize\textbf{#1}\\\scriptsize (#2)}}}}
\newcommand{\llamaradarheader}{& \makebox[0.39\textwidth][c]{\textbf{Llama-3.2-3B}} & \makebox[0.39\textwidth][c]{\textbf{Llama-3.1-8B}} \\[0.25cm]}

\begin{figure}[H]
    \centering
    \radarlegend
    \vspace{0.12cm}

    \setlength{\tabcolsep}{0pt}
    \begin{tabular}{c c c}
        \llamaradarheader

        \llamaradarlabel{Pretrained}{0--60} &
        \llamaradarinclude{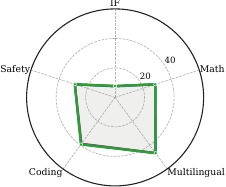} &
        \llamaradarinclude{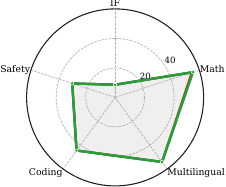} \\[0.8cm]

        \llamaradarlabel{TA}{20--80} &
        \llamaradarinclude{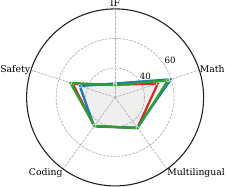} &
        \llamaradarinclude{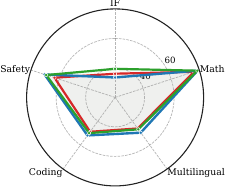} \\[0.8cm]

        \llamaradarlabel{TIES}{20--80} &
        \llamaradarinclude{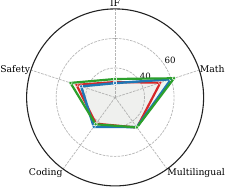} &
        \llamaradarinclude{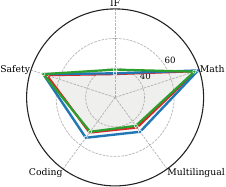} \\[0.8cm]

        \llamaradarlabel{RegMean}{10--80} &
        \llamaradarinclude{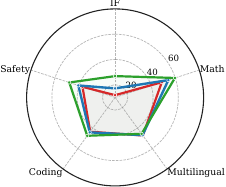} &
        \llamaradarinclude{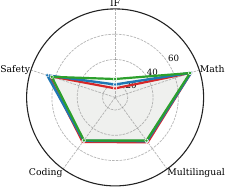}
    \end{tabular}

    \vspace{10pt}
    \caption{\textbf{Radar charts: Llama per-task breakdowns} (corresponding to Table~\ref{tab:nlp_main}). The radial axis limits (min-max) are annotated on the left. ``IF'' denotes Instruction Following.}
\label{fig:radar_llama_part1}
\end{figure}

\newpage

\begin{figure}[H]
    \centering
    \radarlegend
    \vspace{0.12cm}

    \setlength{\tabcolsep}{0pt}
    \begin{tabular}{c c c}
        \llamaradarheader

        \llamaradarlabel{TSV}{30--80} &
        \llamaradarinclude{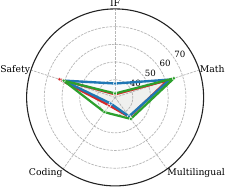} &
        \llamaradarinclude{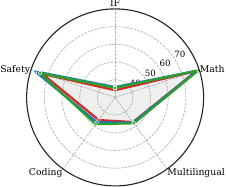} \\[0.8cm]

        \llamaradarlabel{WUDI}{20--80} &
        \llamaradarinclude{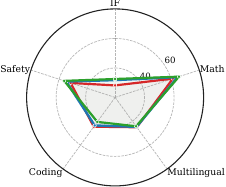} &
        \llamaradarinclude{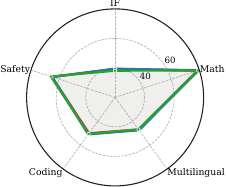} \\[0.8cm]

        \llamaradarlabel{ISO-CTS}{20--80} &
        \llamaradarinclude{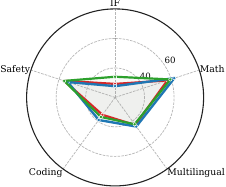} &
        \llamaradarinclude{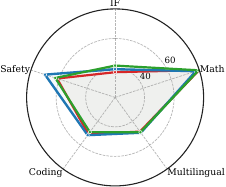}
    \end{tabular}

    \vspace{10pt}
    \caption{\textbf{Radar charts: Llama per-task breakdowns} (corresponding to Table~\ref{tab:nlp_main}). Continued from Figure~\ref{fig:radar_llama_part1} on the remaining anchor models. ``IF'' denotes Instruction Following.}
\label{fig:radar_llama_part2}
\end{figure}

%% file: tables/fewshot.tex
\begin{table}[t]
\centering
\caption{Different Gram matrix estimation strategies and their impacts on 8-, 14-, and 20-task merging with ViT-B/32 architecture. The \textit{mix} variant interpolates between data-assisted Gram matrix estimate (1-shot or 5-shot) and data-free estimate by a BO-optimized weight. Results for data-free and data-assisted variants are reported as mean $\pm$ std over seeds 0--4.}
\label{tab:gram_ablation}
\vspace{2pt}
\resizebox{\textwidth}{!}{
\small
\setlength{\tabcolsep}{3pt}
\begin{tabular}{l ccc ccc ccc}
\toprule
\multirow{2}{*}{\textbf{Setting}} & \multicolumn{3}{c}{\textbf{8 Tasks}} & \multicolumn{3}{c}{\textbf{14 Tasks}} & \multicolumn{3}{c}{\textbf{20 Tasks}} \\
\cmidrule(lr){2-4} \cmidrule(lr){5-7} \cmidrule(lr){8-10}
& \textbf{TSV} & \textbf{WUDI} & \textbf{ISO-CTS} & \textbf{TSV} & \textbf{WUDI} & \textbf{ISO-CTS} & \textbf{TSV} & \textbf{WUDI} & \textbf{ISO-CTS} \\
\midrule
anchor      & 85.9 & 87.0 & 86.4 & 79.9 & 80.5 & 81.5 & 76.9 & 76.1 & 77.6 \\
data-free   & $87.8_{\pm0.04}$ & $87.6_{\pm0.04}$ & $89.0_{\pm0.03}$ & $82.8_{\pm0.03}$ & $81.7_{\pm0.06}$ & $84.3_{\pm0.05}$ & $79.7_{\pm0.05}$ & $78.0_{\pm0.02}$ & $81.5_{\pm0.04}$ \\
\midrule
1-shot      & $87.8_{\pm0.06}$ & $\mathbf{89.0}_{\pm\mathbf{0.02}}$ & $88.7_{\pm0.03}$ & $83.2_{\pm0.15}$ & $\mathbf{84.1}_{\pm\mathbf{0.06}}$ & $84.2_{\pm0.07}$ & $80.6_{\pm0.06}$ & $\mathbf{81.1}_{\pm\mathbf{0.04}}$ & $80.5_{\pm0.06}$ \\
1-shot mix  & $\mathbf{88.9}_{\pm\mathbf{0.05}}$ & $\mathbf{89.0}_{\pm\mathbf{0.08}}$ & $\mathbf{89.7}_{\pm\mathbf{0.05}}$ & $\mathbf{84.3}_{\pm\mathbf{0.07}}$ & $\mathbf{84.1}_{\pm\mathbf{0.05}}$ & $\mathbf{85.7}_{\pm\mathbf{0.04}}$ & $\mathbf{81.5}_{\pm\mathbf{0.15}}$ & $\mathbf{81.1}_{\pm\mathbf{0.03}}$ & $\mathbf{82.8}_{\pm\mathbf{0.03}}$ \\
\midrule
5-shot      & $88.5_{\pm0.11}$ & $\mathbf{89.3}_{\pm\mathbf{0.06}}$ & $89.5_{\pm0.03}$ & $84.1_{\pm0.04}$ & $\mathbf{84.6}_{\pm\mathbf{0.07}}$ & $85.3_{\pm0.04}$ & $81.7_{\pm0.02}$ & $\mathbf{81.9}_{\pm\mathbf{0.04}}$ & $82.2_{\pm0.02}$ \\
5-shot mix  & $\mathbf{89.4}_{\pm\mathbf{0.11}}$ & $\mathbf{89.3}_{\pm\mathbf{0.07}}$ & $\mathbf{90.2}_{\pm\mathbf{0.02}}$ & $\mathbf{84.9}_{\pm\mathbf{0.11}}$ & $\mathbf{84.6}_{\pm\mathbf{0.08}}$ & $\mathbf{86.1}_{\pm\mathbf{0.06}}$ & $\mathbf{82.4}_{\pm\mathbf{0.03}}$ & $\mathbf{81.9}_{\pm\mathbf{0.03}}$ & $\mathbf{83.7}_{\pm\mathbf{0.02}}$ \\
\bottomrule
\end{tabular}
}
\end{table}